\newtheorem{theorem}{Theorem}
\theoremstyle{definition}
\newtheorem{definition}{Definition}
\newtheorem{example}{Example}[section]
\newcommand \anonymize [1] {#1}
\definecolor{deepblue}{rgb}{0,0,0.5}
\definecolor{deepred}{rgb}{0.6,0,0}
\definecolor{deepgreen}{rgb}{0,0.5,0}
\it\color{deepred},
\newcommand \argmin {\operatornamewithlimits{argmin}}
\newcommand \tab {{\phantom o}}
\newcommand \from \leftarrow
\newcommand \ubar \underline
\newcommand \lift \widehat
\newcommand \oftype {{\hspace{0.08em}:\hspace{0.08em}}}
\newcommand \subs \mapsto %{{\hspace{0.2em}\resizebox{0.5em}{\height}{=}\hspace{0.2em}}}
\newcommand \liftedindex [2] {#1\hspace{0.05em}{\scriptsize\pmb[}\hspace{0.1em}#2\hspace{0.2em}{\scriptsize\pmb]}}
\newcommand \redto {\,\Rightarrow\,}
\newcommand \Mid {{\;\bigm|\;}}
\newcommand \sample {{\operatorname*{sample}}}
\newcommand \observe {{\operatorname*{observe}}}
\newcommand \op \operatorname
\newcommand \Sum {\textstyle\sum\limits}
\newcommand \Prod {\textstyle\prod\limits}
\newcommand{\precision}{\Lambda}
\definecolor{orange}{rgb}{1,0.647,0}
\definecolor{darkred}{rgb}{0.5,0,0}
\newcommand{\OO}{\mathcal{O}}
\newcommand{\NN}{\mathcal{N}}
\icmltitlerunning{Functional Tensors for Probabilistic Programming}
\begin{document}

\twocolumn[
\icmltitle{Functional Tensors for Probabilistic Programming}

% It is OKAY to include author information, even for blind
% submissions: the style file will automatically remove it for you
% unless you've provided the [accepted] option to the icml2020
% package.

% List of affiliations: The first argument should be a (short)
% identifier you will use later to specify author affiliations
% Academic affiliations should list Department, University, City, Region, Country
% Industry affiliations should list Company, City, Region, Country

% You can specify symbols, otherwise they are numbered in order.
% Ideally, you should not use this facility. Affiliations will be numbered
% in order of appearance and this is the preferred way.
\icmlsetsymbol{equal}{*}
\icmlsetsymbol{atuber}{$\dagger$}

\begin{icmlauthorlist}
\icmlauthor{Fritz Obermeyer}{uber,equal}
\icmlauthor{Eli Bingham}{uber,equal}
\icmlauthor{Martin Jankowiak}{uber,equal}
\icmlauthor{Du Phan}{uber}
\icmlauthor{Jonathan P. Chen}{facebook,atuber}
\end{icmlauthorlist}

\icmlaffiliation{uber}{Uber AI, San Francisco, CA, USA}

\icmlaffiliation{facebook}{Facebook, Menlo Park, CA, USA}

\icmlcorrespondingauthor{Fritz Obermeyer}{fritzo@uber.com}

\icmlkeywords{probabilistic programming, variable elimination, bayesian inference}

\vskip 0.3in
]

%\printAffiliationsAndNotice{}  % leave blank if no need to mention equal contribution
\printAffiliationsAndNotice{\icmlEqualContribution. \textsuperscript{$\dagger$}Work done at Uber AI.} % otherwise use the standard text.

\begin{abstract}
%Tensors are a popular versatile software abstraction used in machine learning and automatic differentiation frameworks.
%We describe a software abstraction capturing some of the benefits of tensors while being also able to describe a wide class of probability distributions.   ...seminumerical, pattern matching, integration with AD systems...
It is a significant challenge to design probabilistic programming systems that can accommodate a wide variety of inference strategies within a unified framework.
Noting that the versatility of modern automatic differentiation frameworks is based in large part on the unifying concept of tensors, we describe a software abstraction for integration---functional tensors---that captures many of the benefits of tensors, while also being able to describe continuous probability distributions. 
Moreover, functional tensors are a natural candidate for generalized variable elimination and parallel-scan filtering algorithms that enable parallel exact inference for a large family of tractable modeling motifs.
We demonstrate the versatility of functional tensors by integrating them into the modeling frontend and inference backend of the Pyro programming language.
In experiments we show that the resulting framework enables a large variety of inference strategies, including those that mix exact and approximate inference.%
\end{abstract}

%%%%%%%%%%%%%%%%%%%%%%%%%%%%%%%%%%%%%%%%%%
\section{Introduction}
%%%%%%%%%%%%%%%%%%%%%%%%%%%%%%%%%%%%%%%%%%

%\fho{focus on integration with AD systems}

%\cite{goodman2014design,murray2018automated,tran2017deep,carpenter2017stan}
%Among inference strategies Monte Carlo sampling methods are widely applicable for approximating integrals, and Rao-Blackwellization (analytic integration) is an exact method which applies to a narrower class of problems.
%Rao-Blackwellization is often applied as a substrategy inside an outer Monte Carlo strategy, e.g.~delayed sampling \cite{murray2017delayed}, automatic conjugacy detection \cite{hoffman2018autoconj}, full-program computer algebra rewriting \cite{narayanan2016probabilistic}, and discrete enumeration \cite{bingham2018pyro}.
Probabilistic programming systems allow specification of probabilistic models in high-level programming languages and provide automation of probabilistic inference \citep{van2018introduction}.
It remains a significant challenge to design unified frameworks that can accommodate a wide variety of exact and approximate inference strategies, including message-passing, MCMC, and variational inference.
This work is motivated by the general goal of enabling mixed inferences strategies for probabilistic programs.
As a concrete example consider an inference algorithm that combines modern black-box variational inference with classic algorithms that leverage conjugacy (e.g.~Kalman filters).
Enabling the former requires support for Monte Carlo sampling and automatic differentiation (AD), while the latter calls for symbolic computation of sums (for discrete factors) and integrals (for Gaussian factors).
Recent work \citep{obermeyer2019tensor} exploits the algebraic properties of tensors to support such mixed inference in discrete latent variable models.
%In this work we propose \emph{functional tensors}, 
We extend these ideas with \emph{functional tensors},
%a software abstraction that generalizes the algebraic properties of tensors to a wide class of continuous and discrete probability distributions, 
a software abstraction and embedded domain-specific language for seminumerical \emph{automatic integration}
that serves as an intermediate representation for a wide variety of mixed inference strategies in probabilistic programming systems. 
%in particular combining discrete and Gaussian exact inference with Monte Carlo sampling.
%This software abstraction aims to simplify the writing of inference strategies in probabilistic programming languages.

%Two classes of probabilistic models are particularly amenable to exact inference: graphical models with fully discrete or fully Gaussian factors.
%\cite{bingham2018pyro} performs exact inference in discrete models by exploiting algebraic properties of tensors, or multidimensional arrays.
% We propose a software abstraction that balances expressiveness with computational efficiency, capturing a number of programming motifs in probabilistic computing, and enabling a variety of inference algorithms to be written in a generic way.

% The funsor library and experiments are available at \anonymize{\url{https://github.com/pyro-ppl/funsor}}.

% \TODO{Sasha: Summary and results?}

%%%%%%%%%%%%%%%%%%%%%%%%%%%%%%%%%%%%%%%%%%
\section{Functional tensors} \label{sec:semantics-all}
%%%%%%%%%%%%%%%%%%%%%%%%%%%%%%%%%%%%%%%%%%

Tensors, or more properly ``multidimensional arrays,'' are a popular and versatile software abstraction for performing parallelizable operations on homogeneous blocks of memory.
Each tensor is backed by a single block of memory addressable by a tuple of bounded integers, where each integer indexes into a \emph{dimension} of the tensor.
Tensor libraries provide operations that act on tensors, including pointwise operations like addition and multiplication, reduction operations such as product and sum, and combined operations such as matrix multiplication and convolution.
A important property of tensor libraries is support for \emph{broadcasting}, whereby an operation defined on smaller tensors or scalars can be uniquely extended to an operation on tensors with extra dimensions on the left, so long as shapes are compatible \citep{oliphant2006guide}.
%\footnote{Two tensors are of \emph{compatible shape} iff, for each dimension counting from the right, either the sizes agree or one size is equal to 1; see \citep{oliphant2006guide}.}
%In the context of probabilistic inference algorithms such as variable elimination \cite{obermeyer2019tensor}, each of a given tensor's dimensions can serve one of two purposes: as an enumeration of the support of a discrete random variable, or as a \emph{plate} dimension, i.e.~a batch dimension with an assumption of statistical independence.

%The observation motivating functional tensors is that tensor dimensions can be viewed as \emph{free variables}, batched tensors can be viewed as \emph{open terms}, and operator support for broadcasting can be viewed as extending ground operators to \emph{open terms}, i.e.~terms with free variables \citep{barendregt2013lambda}.
This property is extensively exploited by the Pyro probabilistic programming language \citep{bingham2018pyro}
and its implementation of tensor variable elimination for exact inference in discrete latent variable models \citep{obermeyer2019tensor},
in which each random variable in a model is associated with a distinct tensor dimension
and broadcasting is used to compile a probabilistic program into a discrete factor graph \citep{obermeyer2018automated}.
%This implementation relies on highly optimized subroutines for tensor contraction \cite{smith2018opt_einsum}, whereas prior work often relies on bespoke implementations of the junction tree algorithm.

Functional tensors (hereafter ``{\bf funsors}'') both formalize and extend this seemingly idiosyncratic but highly successful approach to probabilistic program compilation
by generalizing tensors and broadcasting to allow free variables of non-integer types that appear in probabilistic models, such as real number, real-valued vector, or real-valued matrix.
Building on this, we describe a simple language of lazy funsor expressions that can serve as a unified intermediate representation for a wide variety of probabilistic programs and inference algorithms.
While in general there is no finite representation of functions of real variables, we provide a funsor interface for restricted classes of functions,\footnote{Note that funsor expressions technically represent \emph{distributions} in the functional analysis sense, since expressions containing Delta terms are not proper functions of their free variables and are only formally defined via integration against test functions.} including lazy algebraic expressions, non-normalized Gaussian functions, and Dirac delta distributions.
This restricted class of funsors retains the important property of tensors that an atomic funsor with $n$ free variables is backed by $O(1)$ many blocks of memory, and operations on funsors can be implemented by $O(1)$ many parallel operations (e.g.~GPU kernels) on that memory.

In the remainder of this section, we overview funsor syntax (Sec.~\ref{sec:syntax}), type judgements (Sec.~\ref{sec:shapes}), and operational semantics (Sec.~\ref{sec:patterns}), including the atomic distribution funsors Tensor, Gaussian, and Delta; 
we also include a complete example (Fig.~\ref{ex:syntax}) and describe how inference algorithms may be implemented as nonstandard interpretations (Sec.~\ref{sec:nonstandard}).
%we also describe how inference algorithms may be implemented as nonstandard interpretations (Sec.~\ref{sec:nonstandard}).
% in Sec.~\ref{sec:ppl} we illustrate funsor usage in probabilistic programming; and
% in Sec.~\ref{sec:distributions} we describe the atomic distribution funsors Tensor, Gaussian, and Delta.

\subsection{Funsor syntax} \label{sec:syntax}

Funsors are terms in a first order language of arrays and array indices; we exclude higher order functions.

\begin{definition} \label{def:type}
  A \emph{type} is defined by the grammar
  \begin{align*}
    \tau \in \op{Type} ::=\;
      & \mathbb Z_n
       && \text{``bounded integer''} \\
      \Mid & \mathbb Z_{n_1}{\times}\cdots{\times}\mathbb Z_{n_k} \!\to \mathbb R
       && \text{``real-valued array''}
  \end{align*}
  for any $n,n_1,\dots,n_k,k \in \mathbb N$. A \emph{type context} is a set $\Gamma=(v_1\oftype\tau_1,\dots,v_k\oftype\tau_k)$ of name:type pairs for names $v\in\mathbb S$ in a countable set of symbols $\mathbb S$ (e.g.~strings).
  We write real-valued array types as $\mathbb R^{n_1 \times \cdots \times n_k}$ and scalars $\mathbb R^1$ as $\mathbb R$.
\end{definition}
Note a \emph{type} generalizes the size of a tensor dimension, and a \emph{type context} generalizes the shape of a tensor.
A \emph{funsor} generalizes both tensors and lazy tensor expressions.
\begin{definition} \label{def:funsor}
  A \emph{funsor} is defined by the grammar
  \begin{align*}
    e \in \op{Funsor} ::= \; & \op{Tensor}(\Gamma, w) && \text{``discrete factor''}\\
      \Mid & \op{Gaussian}(\Gamma, i, \precision) && \text{``Gaussian factor''}\\
      \Mid & \op{Delta}(v, e) && \text{``point mass''}\\
      \Mid & \op{Variable}(v, \tau) && \text{``delayed value''}\\[-0.1em]
      \Mid & \lift f(e_1, \dots, e_n) && \text{``apply function''}\\
      \Mid & e_1[v \subs e_2] && \text{``substitute''}\\
      \Mid & \Sum_v e && \text{``marginalize''}\\[-0.2em]
      \Mid & \Prod_v e  && \text{``plated product''}
  \end{align*}
  where
  $\Gamma$ is a type context, $w,i,\precision$ are multidimensional arrays of numerical data,
  $v \in \mathbb S$ is a variable name, $\tau\in T$ is a type, and $f$ is any function defined on numerical objects, e.g.~binary multiplication $\times$
  %and nullary constants $0$ and $1$ for each type.
  and constants $0$ and $1$.
  % Markov product generalizes the usual product, and is fully defined in Sec.~\ref{sec:markov-product}.
\end{definition}

%A complete example illustrating the use of this language for describing a Gaussian mixture model is shown in Figure \ref{ex:syntax}.

\begin{figure}
    \input{gmm_example}
\end{figure}

\subsection{Typing rules} \label{sec:shapes}

Funsor expressions are typed according to the types of their subexpressions and free variables.\footnote{See Appendix~\ref{sec:rules.types} for complete list of typing rules.}
Definition \ref{def:fv} first introduces some notation used here and elsewhere in the paper.
\begin{definition} \label{def:fv}
  The set of free variables of a funsor $e$ is denoted $\op{fv}(e)$.
  A funsor is \emph{open} if it has free variables and \emph{closed} otherwise.
  Each basic numerical object ${x\in\tau}$ defines a \emph{ground} funsor ${\widehat x = \op{Tensor}((), x)}$.
  To declare that $\Gamma$ is a type context for the free variables of a funsor $f$ of type $\tau$, we write $\Gamma \vdash f\oftype\tau$. As sets, type contexts are unordered.\footnote{In our implementation, to ensure determinism and reproducibility we fix a canonical order based on orders of subexpressions, but this does not affect our semantics.}
\end{definition}

%We can now demonstrate some key example rules.
Some terms may introduce or eliminate free variables.
For example, the marginalization and plated product funsor operations eliminate an input variable.\footnote{We sometimes overload this notation to multiple variables}
\begin{minipage}{0.48\columnwidth}
    \begin{mathpar}
        \inferrule{
          \Gamma, v \oftype \tau \vdash e \oftype \mathbb R^s
        }{
          \Gamma \vdash \Sum_v e : \mathbb R^s
        }{}
    \end{mathpar}
\end{minipage}
\begin{minipage}{0.48\columnwidth}
\begin{mathpar}
    \inferrule{
      \Gamma, v \oftype \tau \vdash e \oftype \mathbb R^s
    }{
      \Gamma \vdash \Prod_v e : \mathbb R^s
    }{}
\end{mathpar}
\end{minipage}

Most other funsor operations, such as binary product $\times$, 
simply aggregate the free variables of all of their arguments:
\begin{mathpar}
    \inferrule{
      f \in \tau_1\times\dots\times\tau_n \to \tau_0 \\\\
      \Gamma_1 \vdash e_1 \oftype \tau_1 \\
      \cdots \\
      \Gamma_n \vdash e_n \oftype \tau_n
    }{
      \cup_n \Gamma_n \vdash \lift f(e_1, \dots, e_n) : \tau_0
    }{}
\end{mathpar}

\subsection{Operational semantics} \label{sec:patterns}

Funsor computations are executed by seminumerical term rewriting.
We specify an \emph{interpretation} in the form of a complete set of patterns and rewrite rules%
\footnote{See Appendix~\ref{sec:rules.rewriting} for a partial list of rewrite rules.}
for every term in the language \ref{def:funsor} and rely on a dispatch mechanism to match and execute rules until termination \citep{carette2009finally}, using types to ensure rewrites are valid.
%Each rule contains a pattern to match and logic to transform a term into another if the term matches the pattern.
Rewrites include both symbolic term rewriting (including reflection, in which an expression is rewritten to a lazy version of itself) and low-level numerical computation, similar to AD systems.

The low-level numerical computations are performed in rewrite rules that manipulate distribution funsors,
which are the basic latent factors in sum-product expressions constructed during probabilistic inference.
%While we implement a variety of distribution funsors to serve as likelihoods in \texttt{observe} statements, 
We focus attention on three special distributions that are jointly closed under products and marginalization,%
\footnote{\label{footnote:closure}The \textsc{Exact} interpretation disallows marginalizing over Gaussian mixture components; the approximate \textsc{MomentMatching} interpretation allows arbitrary marginalization.
}
 and thus especially attractive as representations of latent variable models.
These three funsors are:
\emph{i}) Tensor funsors to represent discrete joint probability mass functions;
\emph{ii}) Gaussian funsors to represent joint multivariate normal distributions among sets of real-tensor valued variables, possibly dependent on other discrete variables;
and \emph{iii}) Delta funsors to represent degenerate distributions and Monte Carlo samples.

\subsubsection{Tensor}

Tensor funsors represent a non-normalized mass function as a single tensor (multidimensional array) of weights.
%Thus standard variable elimination can be seen as mere tensor contraction.
Memory cost and computation cost are both exponential in the number of free variables.
The crucial rewrite rule for Tensor funsors allows operations $\widehat f(e_1, \dots, e_n)$ on Tensor funsors $e_1,\dots,e_n$ to be eagerly evaluated even in the presence of free variables; this is especially useful when e.g.~$f$ is a neural network whose inputs depend on lazily sampled discrete random variables:
\begin{mathpar}
  % \widehat f\left(\op{Tensor}, \dots, \op{Tensor}\right)
  %   \redto \op{Tensor}
  %   \hspace{2em}\text{``broadcast''}
  \widehat f\left(\op{Tensor}(\Gamma_1, w_1), \dots, \op{Tensor}(\Gamma_n, w_n)\right)
    \redto \op{Tensor}(\cup_k \Gamma_k, f\left(w_1, \dots, w_n)\right)
\end{mathpar}
This rule admits an efficient and general implementation when using a modern tensor
library like PyTorch as a backend, since all of their ops natively support broadcasting.

\subsubsection{Gaussian}

Gaussian funsors represent a Gaussian density function among multiple real-array-valued free variables.
using the information form of the Kalman filter \citep{anderson1979optimal,bar1995multitarget}, i.e.~as pair $(i,\precision)$, where $i=\precision\mu$ is the information vector, $\mu$ is the mean, and $\precision=\Sigma^{-1}$ is the precision matrix, the inverse of the covariance matrix $\Sigma$.
The information form is useful in information fusion problems because it allows representation of rank-deficient joint distributions, such as a conditional distribution treated as a single Gaussian factor%
%; in practice a joint distribution often becomes full rank only after fusing multiple individually rank-deficient Gaussian factors%
.
We implement marginalization via Cholesky decomposition in the usual way, thus restricting marginalization to variables with full-rank precision matrices%
%; in practice this has not limited modeling capability
.
Binary and plated products of Gaussian funsors correspond to Bayesian fusion in which the information vectors and precision matrices are added:
\begin{mathpar}
  \op{Gaussian}(\Gamma_1, i_1, \precision_1) \times \op{Gaussian}(\Gamma_2, i_2, \precision_2)
    \redto \op{Gaussian}(\Gamma_1 \cup \Gamma_2, i_1 + i_2, \precision_1 + \precision_2)
\end{mathpar}
where the underlying numerical arrays $i_k, \precision_k$ have been aligned correctly based on the information in the type contexts and operations on them like $+$ broadcast.\footnote{See Appendix \ref{sec:rules} for details on atomic term data handling}
%%% BEGIN optional content that could be trimmed if we absolutely need more space
Gaussians are canonicalized to map the zero vector to zero log density%
%, i.e.~they are arbitrary positive semidefinite quadratic forms that pass through the origin
.
Normalized Gaussian densities are represented as lazy products of a Tensor funsor (for the normalization constant) and a Gaussian funsor (for geometry),
so while Gaussian funsors are not closed under marginalization, products of Tensor and Gaussian funsors are.
%%% END optional content
Memory cost is quadratic and computation cost is cubic in the total number of elements in all free real-tensor-valued variables; both costs are exponential in the number of bounded integer free variables. %%% (i.e.~batch dimensions).

\subsubsection{Delta}

Gaussian and Tensor funsors are jointly closed under marginalization and products,
but they can only represent a limited set of probability distributions over real variables.
Rather than adding other non-Gaussian real-valued density terms to our language,
none of which have such favorable algebraic properties,
we introduce a representation of empirical distributions,
so that even analytically intractable joint distributions can be represented approximately.
Delta funsors represent a point distribution as a pair of numerical arrays $(v, x)$, where $v$ is a symbol and $x$ is a Tensor funsor, possibly with free discrete variables corresponding to batch dimensions.
The crucial rewrite rule for Delta funsors triggers substitution in binary products: if $v\in\op{fv}(e_2)$,
$$
  \op{Delta}(v,e_1)\times e_2 \redto \op{Delta}(v,e_1)\times e_2[v\subs e_1]
$$
Delta funsors are normalized to integrate to $1$: if $v\notin\op{fv}(e_3)$ (for example, as is the case after applying the previous rule),
$$
    \Sum_v \op{Delta}(v,e_1) \times e_3 \redto e_3
$$
We can combine these two rules with the Tensor broadcasting rule to represent Monte Carlo expectations $\sum_v \op{Delta}(v, e) \times \widehat f(\op{Variable}(v\oftype\mathbb R))$ for arbitrary $f$, significantly expanding the expressiveness of the language.

Memory and computational costs for a product of Deltas are linear in the number of terms in the product.

\subsection{Approximation with nonstandard interpretation} \label{sec:nonstandard}

Tensors, Gaussians, and Deltas are algebraically closed in combination, i.e. any sum-product of Tensor, Gaussian, and Delta factors can be rewritten%
\textsuperscript{\ref{footnote:closure}}
to a product of zero or more Deltas, an optional Tensor, and an optional Gaussian.
Our rewrite system captures this fact as a normal form funsor%
\footnote{See Appendix~\ref{sec:rules.rewriting} for details.}
representing a lazy finitary product, together with rules for commutativity, associativity, distributivity, and substitution.

This closure property makes it possible to specify and compose program transformations as nonstandard interpretations \citep{carette2009finally}.
Each interpretation is a set of rewrite rules that conform to the typing rules in this section and the appendix, and users can choose and interleave interpretations at runtime.
For example an \textsc{Exact} interpretation eagerly evaluates tractable funsors but leaves non-analytic integrals lazy,
a fully \textsc{Lazy} interpretation records an expression for optimization and static analysis,
and an \textsc{Optimize} interpretation (discussed in Sec.~\ref{sec:ve}) implements variable elimination by using distributivity to rewrite lazy sum-product funsor expressions.
Most interpretations build on \textsc{Exact} and \textsc{Lazy},
sharing all of their rewrite rules except for a small number of new or modified rules.

Program transformations are especially important in our language because, 
unlike typical tensor operations in AD libraries, some funsor expressions may be non-analytic or computationally intractable, in which case they can only be evaluated approximately.
We can express and compose even these non-semantics-preserving transformations as interpretations, provided they preserve this closure property;
for example, \textsc{MonteCarlo} and \textsc{MomentMatching} interpretations (discussed in Secs.~\ref{sec:monte-carlo} and \ref{sec:moment-matching} respectively) add extra rules for approximate integration through nonstandard interpretations of atomic funsors and marginalization operations.
As we will see, the hierarchy of computational complexity in the three distribution funsors makes this language a natural substrate for approximations where intractable exact terms are rewritten to tractable approximate terms.

% The following two figures are part of then next section
% but are placed here to ensure they appear on the same page as their text descriptions.
\begin{figure*}[t!]
\begin{minipage}{1\textwidth}
    \begin{center}
\renewcommand \arraystretch {1.2}
%\scalebox{0.96}{
\begin{tabular}{@{\hspace{0.2em}} r @{\hspace{1em}} l | l}
1 & \textbf{fun} GenerativeModel($x$)
  & $p \from 1$ \\
  2 & \quad $z \from \sample(\op{Gaussian}((v\oftype\mathbb R),i_z,\precision_z))$
  & $p \from p \times \op{Gaussian}((v\oftype\mathbb R),i_z,\precision_z)[v \subs \op{Variable}(z\oftype\mathbb R)]$ \\
3 & \quad $y \from \exp(z)$ \\
4 & \quad $\observe(\op{Gaussian}((v\oftype\mathbb R,\theta\oftype\mathbb R),i_x,\precision_x)[\theta \subs y],\, x)$
  & $p \from p \times \op{Gaussian}((v\oftype\mathbb R,\theta\oftype\mathbb R),i_x,\precision_x)[\theta \subs y, v \subs x]$ \\
5 & \textbf{end}
  & Maximize over $i_x,\Lambda_x,i_z,\Lambda_z$: $\sum\limits_z p$
\end{tabular}
%}
\renewcommand \arraystretch 1
\end{center}
\caption{
User-facing probabilistic program (left) and automatic inference (right)
for maximum marginal likelihood inference. \texttt{sample} and \texttt{observe} statements multiply the joint probability by their input funsor; \texttt{sample} statements also return Variables for lazy evaluation.
} \label{fig:map-code}

\end{minipage}
\end{figure*}

\begin{figure*}[t!]
    \begin{center}
\renewcommand \arraystretch {1.2}
%\scalebox{0.96}{
  \begin{tabular}{@{\hspace{0.2em}} r @{\hspace{1em}} l | l}
    1 & \textbf{fun} GenerativeModel($x$)
      & $p \from 1$ \\
    2 & \quad $z \from \sample(\op{Gaussian}((v\oftype\mathbb R),i_z, \precision_z))$
      & $p \from p \times \op{Gaussian}((v\oftype\mathbb R),i_z,\precision_z)[v \subs \op{Variable}(z\oftype\mathbb R)]$ \\
    3 & \quad $\observe(\op{Gaussian}((v\oftype\mathbb R,\theta\oftype\mathbb R),i_x,\precision_x)[\theta \subs z],\, x)$
      & $p \from p \times \op{Gaussian}((v\oftype\mathbb R,\theta\oftype\mathbb R),i_x,\precision_x)[v \subs x, \theta \subs z]$ \\
    4 & \textbf{end} \\
    5 & \textbf{fun} InferenceModel($x$)
      & $q \from 1$ \\
    6 & \quad $z \from \sample(\op{Gaussian}((v\oftype\mathbb R,\theta\oftype\mathbb R),i_q, \precision_q)[\theta \subs x])$
      & $q \from q \times \op{Gaussian}((v\oftype\mathbb R,\theta\oftype\mathbb R),i_q, \precision_q)[v\subs z, \theta \subs x]$ \\
    7 & \textbf{end}
      & Maximize over $i_q,\Lambda_q$: $\sum\limits_z q \log \frac p q$
  \end{tabular}
%}
\renewcommand \arraystretch 1
\end{center}
\caption{
    User-facing probabilistic program (left) and automatic inference (right) for variational inference with delayed sampling; \texttt{sample} and \texttt{observe} statements behave as in Fig.~\ref{fig:map-code}.
    The argument $x$ is a Tensor funsor of observations.
  The quantity maximized is the ELBO, demonstrating that other computations in approximate inference are easy to express using our marginalization term.
} \label{fig:elbo-code}

\end{figure*}

\section{An intermediate language for probabilistic programming} \label{sec:ppl}

Probabilistic programming languages like Pyro extend general-purpose languages with two new primitives for representing probabilistic models as programs, a \texttt{sample} statement for sampling random variables and an \texttt{observe} statement for conditioning program executions on data.
Funsors fill two roles in probabilistic programming:
as representations of lazy tensor expressions in user-facing model code generated by nonstandard interpretation of \texttt{sample} statements to return Variable funsors, 
and as representations of joint distributions as products of factors in automatic inference strategies.
We demonstrate these roles in two probabilistic inference tasks, shown in Figs.~\ref{fig:map-code} and \ref{fig:elbo-code}.\footnote{See Appendix \ref{sec:appexamples} for detailed walkthrough of these examples}

Note that Figs.~\ref{fig:map-code}-\ref{fig:elbo-code} use Gaussian factors for concreteness but if we change the factors to Tensors we can \emph{reuse the same modeling code}, and the inference backend compiles to \emph{the same funsor code} except for the atomic terms, demonstrating the strength of the language for building reusable inference components.
For example, the \texttt{sample} statements return lazy values by default, but by interleaving the \textsc{Exact} and \textsc{Lazy} interpretations we can easily implement a sophisticated inference strategy like delayed sampling \citep{murray2017delayed} in which fragments of a program are lazily evaluated, marginalized with variable elimination and used as low-variance proposal distributions in a particle filter.

We also remark that the simplicity of this design comes with other tradeoffs; our language as described only supports a limited subset of probabilistic programs, 
filling a niche analogous to that of the intermediate expression languages in popular trace-based automatic differentiation systems like PyTorch \citep{paszke2017automatic}, % and JAX \citep{jax2018github},
in which the class of differentiable programs in the host language is restricted.

% TODO add a transition here

%\subsection{Numerics of distribution funsors} \label{sec:distributions}
%So far we have discussed the basic data structures and elementary operations for funsor computations.
%Next we describe algorithms that use funsors.

%%%%%%%%%%%%%%%%%%%%%%%%%%%%%%%%%%%%%%%%%%
\section{Algorithms employing funsors} \label{sec:algorithms}
%%%%%%%%%%%%%%%%%%%%%%%%%%%%%%%%%%%%%%%%%%

%We now describe four algorithms for performing exact and approximate inference using funsors.
% stochastic gradient estimation,
% MOVED TO APPENDIX parallel-scan filtering,
% plated variable elimination,
% forward-backward adjoint algorithms,
% and moment matching approximations.

We now describe several algorithms for performing exact and approximate inference,
implemented as syntax extensions or nonstandard interpretations for our term language.
None of these algorithms is novel in isolation,
but they demonstrate the utility of our language's closure properties
and its suitability for building extensible, general-purpose inference tools.
All interpretations are detailed in App.~\ref{sec:rules}.
%In particular, they make use of the differences in computational complexity
%of operations on the three main distribution funsors.

\subsection{Funsor variable elimination} \label{sec:ve}

Variable elimination is a dynamic programming algorithm for performing exact inference efficiently by exploiting conditional independence to distribute sums inside of products.
%Lazy sum-product funsors can be interpreted as factor graphs whose factors are basic funsors (Tensor, Gaussian, Delta), and whose variables are the marginalized variables.
To perform variable elimination with funsors, we record a lazy funsor sum-product expression with the \textsc{Lazy} interpretation, rewrite the expression with an \textsc{Optimize} interpretation using a standard library for tensor contraction \citep{smith2018opt_einsum}, and evaluate the optimized expression with the \textsc{Exact} interpretation.
We further implement plated variable elimination following the algorithm of \citep{obermeyer2019tensor} nearly verbatim,
but generalizing from discrete to arbitrary free variable types because the \textsc{Optimize} interpretation is agnostic to the atomic term types.

\subsection{Parallel-scan Bayesian filtering} \label{sec:parallel-scan}

Parallel-scan Bayesian filtering \citep{sarkka2019temporal} offers an
exponential speedup of sequentially structured variable elimination problems on parallel hardware such as GPUs.
We provide a general implementation of this class of algorithms that integrates seamlessly with the variable elimination machinery of the previous section,
first adding new syntax for a generalized ``Markov product'' operation, and then implementing a parallel-scan rewrite rule in the \textsc{Exact} interpretation for this operation.
This Markov product operation and rule cover familiar cases including Hidden Markov Models (as Markov products of $\op{Tensor}$s) and Kalman filtering (as inference in Markov products of $\op{Gaussian}$s); however the general syntax supports filtering in arbitrarily complex sequential models.
Moreover, the implementation consists of a series of batched products, marginalizations and substitutions, making it automatically compatible with other interpretations that modify those rewrite rules.
See Appendix~\ref{sec:markov-product} for details.

\subsection{Alternate semirings} \label{sec:adjoint}

The previous two sections 
%\ref{sec:ve}-\ref{sec:parallel-scan}
describe how funsors enable high-performance, concise, model-agnostic implementations of parallel variable elimination algorithms that immediately generalize to new atomic term types.
In fact, while variable elimination in its most well-known form computes marginal likelihoods, the same generic algorithm can be applied to many other problems by replacing the (sum, product) operations with different semirings \citep{kohlas2008semiring,belle2016semiring,khamis2016faq}, such as (max,product) for computing
MAP (maximum a posteriori) estimates.
We exploit this fact to make our implementations immediately reusable for these other computational tasks, relying on nonstandard interpretation of the marginalization and binary and plated product operations.

\subsection{Moment matching approximation} \label{sec:moment-matching}

Variable elimination provides a tractable exact inference algorithm in structured probabilistic models with either all discrete or all Gaussian factors.
However exact inference becomes exponentially expensive in models combining both discrete and Gaussian factors, e.g.~the switching linear dynamical system in Sec.~\ref{sec:slds}.

To enable tractable inference in structured probabilistic models combining discrete and Gaussian factors, we implement an approximate \emph{moment matching} interpretation generalizing Interacting Multiple Model (IMM) filters \citep{mazor1998interacting} and similar to expectation propagation \citep{minka2001expectation}.
This interpretation adds a new rewrite rule whereby a Gaussian mixture
%with arbitrarily many real free variables
is approximated by a single joint Gaussian of matching normalizer, mean, and covariance:
\begin{mathpar}  % TODO correct this rule
  %\Sum_v \op{Tensor} \times \op{Gaussian} \redto \op{Tensor} \times \op{Gaussian}
    \Sum_v \op{Tensor}(\Gamma_1, w) \times \op{Gaussian}(\Gamma_2, i, \precision)
    \redto \op{Gaussian}(\Gamma_2 - (v\oftype\tau), i^\prime, \precision^\prime) \times \Sum_v \op{Tensor}(\Gamma_1, w^\prime)
\end{mathpar}
where $i^\prime$, $\precision^\prime$, and $w^\prime$ are computed to match moments of the original expression. For example, in the simplest case of a discrete mixture of Gaussian factors indexed by $k$
and with weights $w_k$ this rewrite rule results in a $\op{Gaussian}$ with mean and covariance given by
%%%
\begin{equation}
\nonumber
\begin{split}
\mu^\prime &= \sum_k w_k \mu_k  = \precision^{\prime -1} i^\prime\\
 \Sigma^\prime  &= \sum_k w_k \Sigma_k + \sum_k w_k (\mu^\prime  - \mu_k)(\mu^\prime  -\mu_k)^{\rm T}  = \precision^{\prime -1}
\end{split}
\end{equation}
%%%

\subsection{Differentiable Monte Carlo approximation} \label{sec:monte-carlo}

Above we have described a variety of funsor algorithms that can be used to exactly compute or
approximate marginal likelihoods, variational objectives, and other key quantities in Bayesian inference.
We can combine these with a \textsc{MonteCarlo} interpretation that approximates some Tensor or Gaussian funsors in a computationally or analytically intractable part of a larger sum-product expression with Monte Carlo samples in Delta funsors, while preserving the tractable parts of the expression for exact computation.
Importantly, by carefully designing rewrite rules using reparametrized samplers \citep{kingma2013auto} and DiCE factors \citep{foerster2018dice} where appropriate, we ensure that the \textsc{MonteCarlo} interpretation preserves differentiability to all orders; see Appendix~\ref{app:monte-carlo} for details.

%%%%%%%%%%%%%%%%%%%%%%%%%%%%%%%%%%%%%%%%%%
\section{Related work}
%%%%%%%%%%%%%%%%%%%%%%%%%%%%%%%%%%%%%%%%%%

\citet{murray2017delayed} introduce \emph{delayed sampling}, a programmatic approach to structured Rao-Blackwellization that combines eager and lazy sampling.
\citet{obermeyer2019tensor} generalize discrete variable elimination to factor graphs with plates, enabling fast inference on parallel hardware.
%, proving that plated factor graphs can be separated into those in which the complexity of discrete variable elimination grows either exponentially or linearly in plate size.
Our work combines these approaches, extending vectorized and mixed eager/lazy inference to continuous models
and formalizing the resulting expressions as \emph{open terms}, i.e.~terms with free variables \citep{barendregt2013lambda}.

PSI Solver \citep{gehr2016psi} and Hakaru \citep{narayanan2016probabilistic,carette2016simplifying} use symbolic algebra systems to perform exact inference on all or part of a probabilistic model.
%Hakaru \citep{narayanan2016probabilistic,carette2016simplifying} implements a probablistic program inference optimizer that compiles to Maple expressions for symbolic manipulation, then performs MCMC inference.
Our work can be seen as a mixed symbolic-numerical approach that provides limited symbolic pattern manipulation and relies on a high-level tensor library (PyTorch \citep{paszke2017automatic}) for automatic differentiation and parallelization.
Indeed we see functional tensors as a compromise between fully symbolic and fully numerical integration in the same way that automatic differentiation is a compromise between symbolic differentiation and numerical differentiation \citep{baydin2018automatic}.
\citet{jax2018github} describe a system of a first-order intermediate language on which machine learning program transformations are expressed as final-style interpreters \citep{carette2009finally}, with a focus on advanced automatic differentiation and hardware acceleration.

\citet{dillon2017tensorflow} describe a low-level software abstraction for implementing probability distributions, in particular taking care to implement batching and broadcasting. %, thereby enabling parallel inference algorithms in \cite{tran2017deep,obermeyer2019tensor}.
Our work can be seen as generalization of such distributions in three directions: from broadcastable dimensions to free variables, from normalized to unnormalized, and from single distributions to joint distributions (still with $O(1)$ many underlying tensors).
\citet{hoffman2018autoconj} design a system for automatic conjugacy detection in computation graphs; our system matches coarser patterns, e.g.~Gaussians rather than polynomials.

\citet{sarkka2019temporal} adapts parallel-scan algorithms to Bayesian filtering settings, demonstrating exponential parallel speedup for inference in discrete HMMs and Kalman filters.
\citet{baudart2019reactive} develop a modeling language for sequential probabilistic models together with linear-time bounded memory inference algorithms.
We generalize parallel-scan inference to a modeling language wider than \citep{sarkka2019temporal} but more restrictive than \citep{baudart2019reactive}.

%%%%%%%%%%%%%%%%%%%%%%%%%%%%%%%%%%%%%%%%%%
\section{Experiments}
%%%%%%%%%%%%%%%%%%%%%%%%%%%%%%%%%%%%%%%%%%

To evaluate the versatility of funsors in probabilistic programming,
we perform inference on a variety of probabilistic models
using the algorithms from Sec.~\ref{sec:algorithms}.
We include models in which
inference can be done exactly (Sec.~\ref{sec:ecohmm}-\ref{sec:bias}) as well as models for which inference is intractable but where approximate
inference algorithms can benefit from funsor computations of tractable subproblems (Sec.~\ref{sec:slds}-\ref{sec:bart}),
excercising compositions of the individual pieces of Sec.~\ref{sec:algorithms}.

%%%%%%%%%%%%%%%%%%%%%%%%%%%%%%%%%%%%%%%%%%
\subsection{Discrete factor graphs}
\label{sec:ecohmm}

%Recent advances in sensor and tracking technology have significantly expanded the scope and scale of animal behavior observations by ecologists.
Hidden Markov models (HMMs) are widely used to analyze animal behavioral data due to their interpretable nature
and the availability of efficient exact inference 
algorithms \citep{zucchini2016hidden, mcclintock2018momentuhmm}.
Here we reproduce one such application, the model selection analysis in \citep{mcclintock2013combining}
of GPS movement data from a colony of harbor seals in the United Kingdom.

Using our parallel scan algorithms for marginal likelihood computation,
we fit four variants of a hierarchical HMM 
with no random effects (\texttt{No RE}), 
sex-level discrete random effects (\texttt{Group RE}), 
individual-level discrete random effects (\texttt{Individual RE}), 
and both types of random effects (\texttt{Individual+Group RE}).
We describe the models, dataset, and training procedure in Appendix \ref{sec:appecohmm}.

\newcommand{\aicsealgnin}{\small $299.3 \times 10^3$}
\newcommand{\aicsealgnid}{\small $288.2 \times 10^3$}
\newcommand{\aicsealgdin}{\small $288.8 \times 10^3$}
\newcommand{\aicsealgdid}{\small $288.0 \times 10^3$}
% tve runtime
\newcommand{\tvesealgnin}{\small $3.21$}
\newcommand{\tvesealgnid}{\small $3.54$}
\newcommand{\tvesealgdin}{\small $3.70$}
\newcommand{\tvesealgdid}{\small $4.01$}
% fve runtime
\newcommand{\fvesealgnin}{\small $0.033$}
\newcommand{\fvesealgnid}{\small $0.033$}
\newcommand{\fvesealgdin}{\small $0.042$}
\newcommand{\fvesealgdid}{\small $0.052$}
\begin{table}[t!]
\begin{center}
\resizebox {.98\columnwidth} {!} {

\begin{tabu}{|c|[1pt]c|c|c|}
    % \hline
    % \cellcolor[gray]{0.75} & \multicolumn{2}{c|}{\small Dataset \cellcolor[gray]{0.95}} \\
    \hline
    \small Model \cellcolor[gray]{0.95} &  \small AIC & \small TVE time (s) & \small FVE time (s) \\  \tabucline[1pt]{-}
    %%%%%%%%%%%%%%%%%%%%%%%%%%%%%%%%%%%%%%%%%%%

    \small \texttt{No RE} & \aicsealgnin  & \tvesealgnin & \fvesealgnin  \\ \hline
    \small \texttt{Individual RE} & \aicsealgnid & \tvesealgnid & \fvesealgnid \\ \hline
    \small \texttt{Group RE} & \aicsealgdin & \tvesealgdin & \fvesealgdin \\ \hline
    \small \texttt{Individual+Group RE} & \aicsealgdid & \tvesealgdid & \fvesealgdid \\ 
    \hline
\end{tabu}
} %% end resize
\end{center}
\caption{AIC scores and wall clock time to compute marginal likelihood and gradients of all parameters using parallel-scan funsor variable elimination (FVE) and sequential tensor variable elimination (TVE) as in \citet{obermeyer2019tensor}. See Sec.~\ref{sec:ecohmm}.
  %See Sec.~\ref{sec:ecohmm}.
  }
\label{tbl:ecohmm}
\end{table}

\begin{figure}[t!]
  \centering
  {\includegraphics[width=0.98\columnwidth]{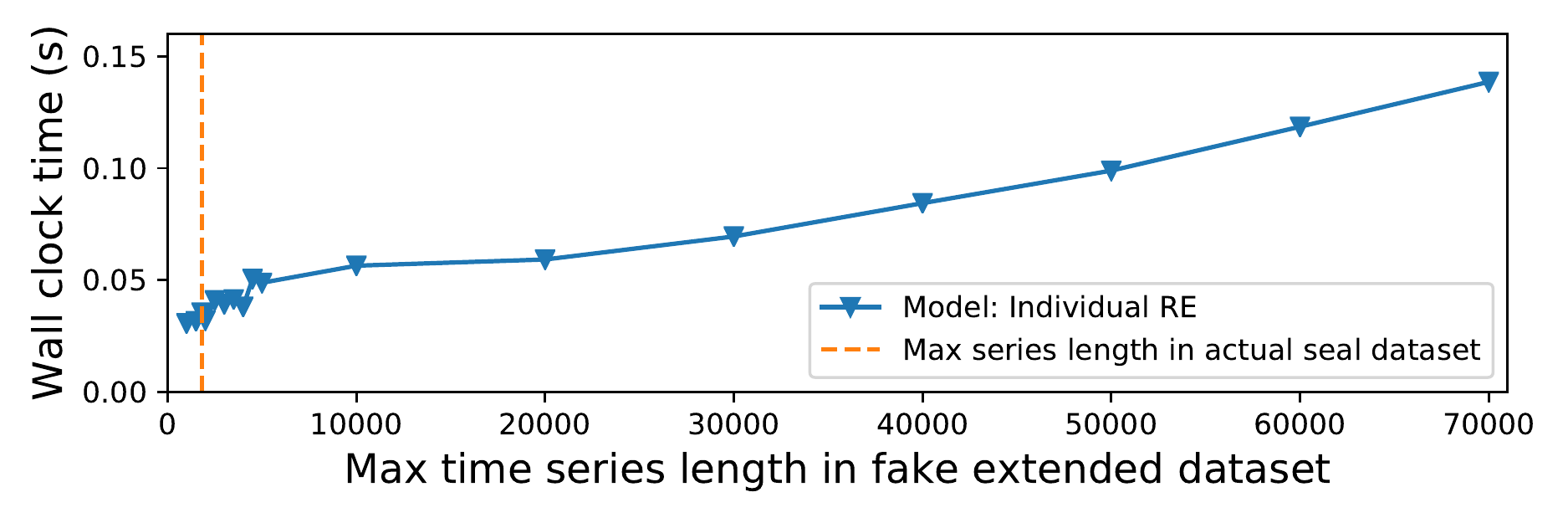}}
  \caption{Time to compute marginal likelihood and gradients of all parameters with funsor variable elimination in a simplified version of model \texttt{Individual RE} on fake seal data extended in time. See Appendix \ref{app:ecohmm-perf} for details.}
  \label{fig:hmmscaling}
\end{figure}

We report AIC scores for the four model variants in Table \ref{tbl:ecohmm}.
As in the original analysis \citep{mcclintock2013combining},
our results support the inference that there is behavioral variation
across individuals that is unexplained either by sex or the available covariates.
The times in Table \ref{tbl:ecohmm} show that our parallel-scan implementation (Sec.~\ref{sec:parallel-scan})
is more than two orders of magnitude faster than tensor variable elimination \citep{obermeyer2019tensor} for this class of models.
Figure \ref{fig:hmmscaling} shows that it achieves the expected logarithmic scaling 
for series $>10\times$ longer than the harbor seal tracks.
See Appendix \ref{app:ecohmm-perf} for details and more scaling experiments.

%%%%%%%%%%%%%%%%%%%%%%%%%%%%%%%%%%%%%%%%%%
\subsection{Kalman filters with global latents}
\label{sec:bias}

Consider a 2-D tracking problem where an object is observed for $T$ time steps by  $S = 5$ synchronized sensors that introduce both iid noise and unknown persistent bias.
Suppose the object follows nearly-constant-velocity (linear-Gaussian) dynamics and observations, 
but that the scales of the process/observation noise and bias are unknown.

Neglecting bias, we could naively perform inference via differentiable Kalman filtering and optimize noise scales to maximize marginal likelihood.
To account for bias we add a persistent Gaussian random variable, as shown in Fig.~\ref{fig:biasfig}.
We exactly marginalize out the bias latent states, then optimize noise scales using gradient descent, leading to more accurate position estimates as in Figure~\ref{fig:biasplots}.
See Sec.~\ref{sec:appbias} for details.

% \fho{WIP} \url{https://github.com/pyro-ppl/funsor/pull/246}. \\
% \jpc{Can trim/move description to appendix per space constraints }. \\

\tikzstyle{factor} = [rectangle, draw]
\tikzstyle{left plate caption} = [caption, node distance=0, inner sep=0pt,
below left=0pt and 0pt of #1.south west]
\tikzstyle{right plate caption} = [caption, node distance=0, inner sep=0pt,
below left=0pt and 0pt of #1.south east]
\begin{figure}[t!]
\begin{center}
% XXX: Reduce the vertical margin space without messing up the rest of the column
\resizebox {.45\columnwidth} {!}{
\begin{tikzpicture}
\definecolor{mygrey}{RGB}{220,220,220}

% states and observations
\node[latent, fill=mygrey](Y1) {$x_{t-1}$};
\node[latent, fill=mygrey, right=of Y1](Y2) {$x_{t}$};

% observations
\node[latent, above=of Y1, yshift=-1em](Z1) {$z_{t - 1}$};
\node[latent, right=of Z1](Z2) {$z_{t}$};
\node[latent, below=of Y1, xshift=2.5em, yshift=1.5em](B1) {$\beta$};

\node[const, left=of Y1](Y3) {$\cdots$};
\node[const, left=of Z1](Z3) {$\cdots$};
\node[const, right=of Y2](Y4) {$\cdots$};
\node[const, right=of Z2](Z4) {$\cdots$};

\plate [inner sep=0.9em, yshift=1.2em] {b}{(Y3)(Y4)(B1)}{};
\node [right plate caption=b-wrap, yshift=1.2em]{$S$};

\edge {Z3} {Z1}
\edge {Z1} {Z2,Y1}
\edge {Z2} {Y2, Z4}
\edge {B1} {Y1, Y2, Y3, Y4}

\end{tikzpicture}
}
\end{center}
\caption{Graphical structure of the continuous state space model in Sec.~\ref{sec:bias}.  $z$ are latent states, $x$ are observations, and $\beta$ is the persistent sensor bias.} 
\label{fig:biasfig}
\end{figure}

\begin{figure}[h!]  % not sure how to get this to always show up in this subsection
  \centering
  % Switch to .png for arxiv submission.
  %{\includegraphics[width=0.98\columnwidth]{}}
  {\includegraphics[width=0.98\columnwidth]{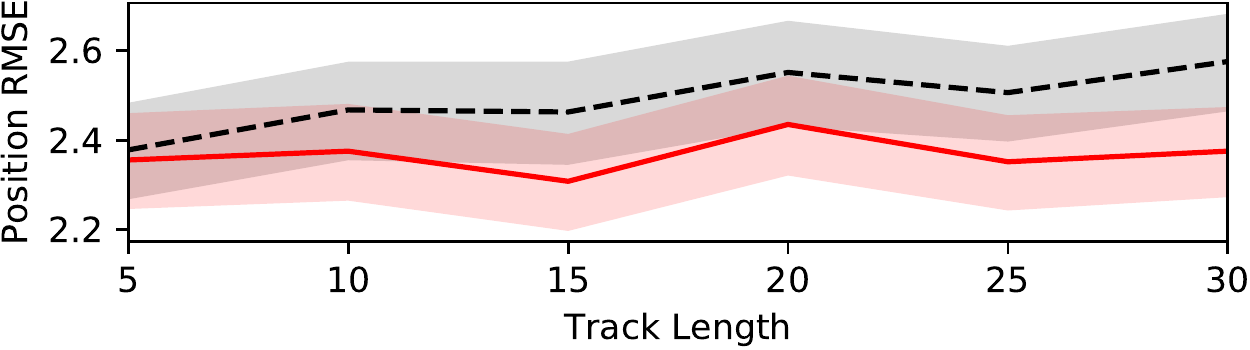}}
  \caption{Position error of the final state estimate for the autotuning Gaussian state space filter of Sec.~\ref{sec:bias} neglecting bias (dashed black) and modeling bias (solid red).}
  \label{fig:biasplots}
\end{figure}

%%%%%%%%%%%%%%%%%%%%%%%%%%%%%%%%%%%%%%%%%%
\subsection{Switching linear dynamical system}
\label{sec:slds}

We use a switching linear dynamical system (SLDS) \citep{ackerson1970state} to model an EEG time series dataset $\{y_t\}_{t=1}^T$ 
from the UCI database
\citep{Dua:2019}.  The generative model is as follows. At each time step $t$ there is both a discrete switching label
$s_t \in [1, ..., K]$ and a continuous latent state $x_t$; both follow Markovian dynamics, see Fig.~\ref{fig:sldsdiagram}. 
We consider three model variants: I) the transition
probabilities $p(x_t | x_{t-1}, s_t)$ depend on the switching state; II) the emission probabilities $p(y_t | x_t, s_t)$ depend
on the switching state; and III) both the transition and emission probabilities depend on the switching state.  See Sec.~\ref{sec:appslds} in the supplementary materials for details.

Exact inference for this class of models is $\OO(K^T)$. To make inference tractable, we use a moment-matching approximation
with window length $L$, reducing the complexity to $\OO(K^{L+1})$. Representing this approximate inference algorithm follows immediately
by employing a \textsc{MomentMatching} interpretation for funsor reductions.\footnote{See Ex.~\ref{ex:slds} in
the appendix for example funsor code.} 
For parameter learning we use gradient ascent on the (approximate)
log marginal likelihood $\log p(y_{1:T})$.  See Table~\ref{table:slds} for the results we obtain for all three model variants with $K=2$ 
switching states and window lengths $L \in \{1, 3, 5\}$.
We obtain the best results with the richest model (SLDS-III), with the most expensive moment-matching approximation ($L=5$) yielding
the lowest mean squared error.
In Fig.~\ref{fig:slds} we depict smoothing estimates for the training data and one-step-ahead predictions 
for the held-out data using the best performing model, validating the efficacy of the moment-matching approximation. 
%The EEG data also include an observed eye state ($0$: open, $1$: closed) at each time step. We note that the transitions between
%switching states in the learned model correlate reasonably well with eye state transitions, 
%despite the fact that the model did not have access to observed eye states during training.

%%%
\begin{table}[t!]
\begin{center}
\resizebox {.85\columnwidth} {!} {
    \begin{tabu}{|c|[1pt]c|c|c|c|c|c|}    \hline
   \cellcolor[gray]{0.75} & \multicolumn{2}{c|}{\small $L = 1$ \cellcolor[gray]{0.95}}  & \multicolumn{2}{c|}{\small $L = 3$ \cellcolor[gray]{0.95}}
    & \multicolumn{2}{c|}{\small $L = 5$ \cellcolor[gray]{0.95}} \\  \hline
    \small Model \cellcolor[gray]{0.95} &  \small MSE & \small LL  &  \small MSE & \small LL &  \small MSE & \small LL \\  \tabucline[1pt]{-}
    %%%%%%%%%%%%%%%%%%%%%%%%%%%%%%%%%%%%%%%%%%%
    \small SLDS-I & \small 0.574 & \small -10.13 & \small 0.574 & \small -10.13 & \small 0.574 & \small -10.13  \\ \hline
    % 0.57409  -10.12786         0.57420  -10.12801       0.57411  -10.12731
     \small SLDS-II & \small 0.527 & \small -9.55 & \small 0.497 & \small -9.64 & \small 0.498 & \small -9.64  \\ \hline
     % 0.52749   -9.55044         0.49679    -9.64067       0.49786     -9.63683
    \small SLDS-III & \small 0.512 & \small {\bf -9.33} & \small 0.511 & \small -9.41 & \small {\bf 0.482} & \small -9.46  \\
    % 0.51228     -9.33275        0.51134   -9.40634        0.48197   -9.45686
    \hline
    \end{tabu}
} %% endresize
\end{center}
     \caption{One-step-ahead test log likelihoods and mean squared errors for SLDS variants with various moment-matching window lengths $L$. 
     See Sec.~\ref{sec:slds} for details.}
\label{table:slds}
\end{table}
%%%

\begin{figure}[t]
  \centering
  {\includegraphics[width=\columnwidth]{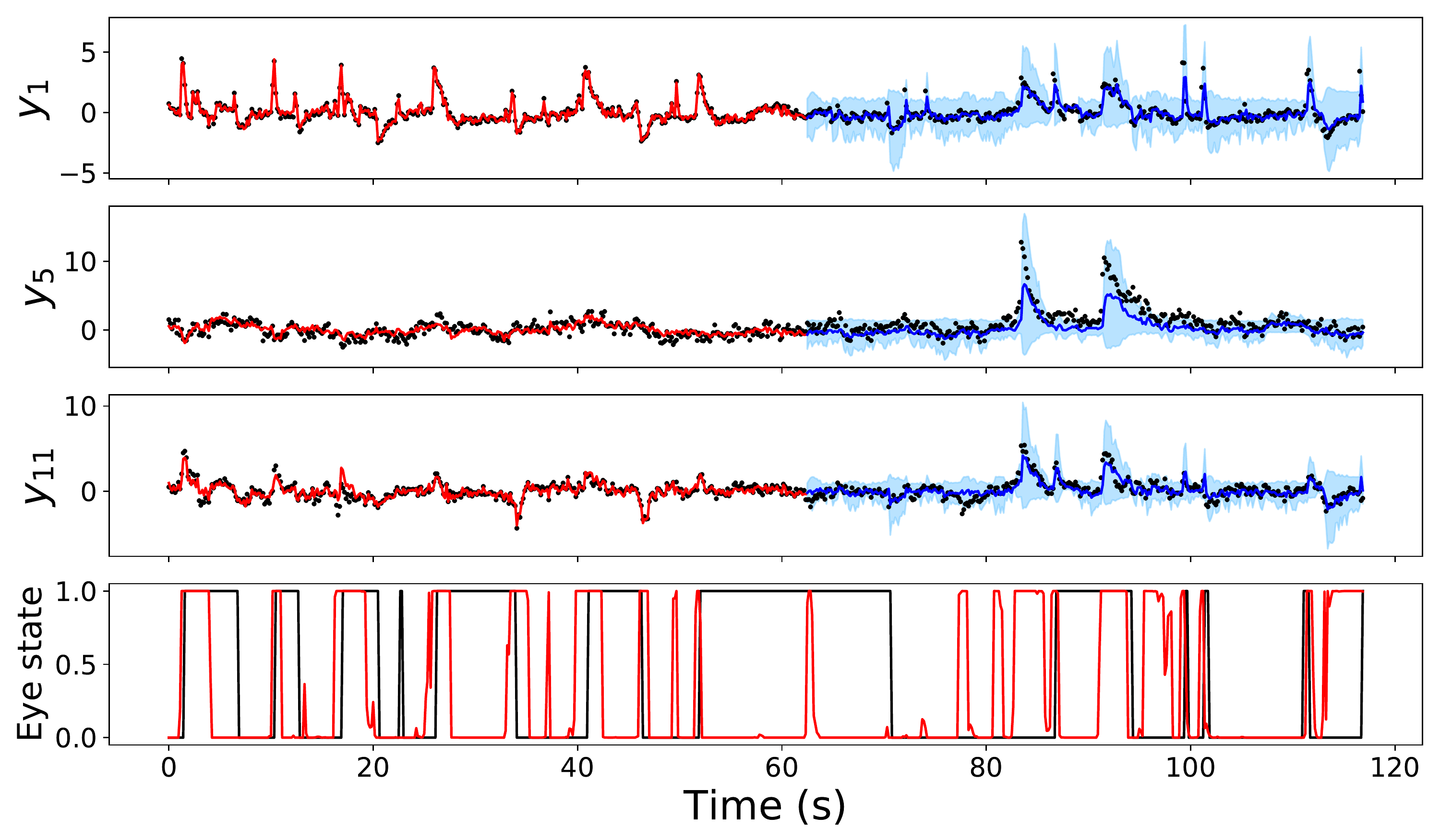}}
  \caption{Smoothing estimates (red) and one-step-ahead predictions (blue; with 90\% confidence intervals) for three 
  randomly selected output dimensions $(y_1, y_5, y_{11})$ for the SLDS experiment in \ref{sec:slds}. The bottom figure depicts
  the observed eye state (black) as well as the smoothing estimate of the inferred switching label $s_t$ (red). Note that the eye state was unobserved during training.}
  \label{fig:slds}
\end{figure}

%%% tikz macros %%%

\tikzstyle{factor} = [rectangle, draw]
\tikzstyle{left plate caption} = [caption, node distance=0, inner sep=0pt,
below left=0pt and 0pt of #1.south west]
\tikzstyle{right plate caption} = [caption, node distance=0, inner sep=0pt,
below left=0pt and 0pt of #1.south east]

\begin{figure}[t!]
\begin{center}
\resizebox {.7\columnwidth} {!} {
\begin{tikzpicture}
\definecolor{mygrey}{RGB}{220,220,220}

% states and observations
\node[latent, fill=mygrey](Y1) {$y_{t-1}$};
\node[latent, fill=mygrey, right=of Y1](Y2) {$y_{t}$};
\node[latent, fill=mygrey, right=of Y2](Y3) {$y_{t+1}$};
\node[latent, above=of Y1,xshift=2em,,yshift=-2.5em](X1) {$x_{t - 1}$};
\node[latent, above=of X1,xshift=-2.5em,yshift=-3em](S1) {$s_{t - 1}$};
\node[latent, right=of X1](X2) {$x_{t}$};
\node[latent, right=of X2](X3) {$x_{t+1}$};
\node[latent, right=of S1](S2) {$s_{t}$};
\node[latent, right=of S2](S3) {$s_{t+1}$};

\node[const, left=of Y1](Y0) {$\cdots$};
\node[const, left=of X1](X0) {$\cdots$};
\node[const, right=of X3](X4) {$\cdots$};
\node[const, left=of S1](S0) {$\cdots$};
\node[const, right=of S3](S4) {$\cdots$};
\node[const, right=of Y3](Y4) {$\cdots$};

\edge {S1} {X1,S2,Y1}
\edge {S2} {X2,Y2,S3}
\edge {X0} {X1}
\edge {S0} {S1}
\edge {S3} {X3,S4,Y3}
\edge {X1} {X2,Y1}
\edge {X2} {Y2,X3}
\edge {X3} {Y3,X4}

\end{tikzpicture}
}
\end{center}
\caption{Graphical structure for the SLDS-III model in Sec.~\ref{sec:slds}. The $\{s_t\}$ form
a chain of discrete switching states and the $\{x_t\}$ are continuous. Here the transition
probabilities $p(x_t | x_{t-1}, s_t)$ and emission probabilities $p(y_t | x_t, s_t)$ both depend on the switching label $s_t$.}
\label{fig:sldsdiagram}
\end{figure}

%%%%%%%%%%%%%%%%%%%%%%%%%%%%%%%%%%%%%%%%%%
\subsection{Neural variational Kalman filter}
\label{sec:bart}

% Comments from Lawrence Murray 2019-10-04:
% This seems pretty neat, and the largest example, I don't quite follow the model though, in terms of where the conditional independencies are. Are all the p and lambda conditionally independent given the latent states? And those latent states are shared between station pairs? Perhaps the left of Figure 7 needs a few plates to better communicate this?

% See code at:
% https://github.com/pyro-ppl/sandbox/blob/funsor-bart/2019-08-time-series/bart/forecast.py}.

We model a high-dimensional count-valued time series by combining exact computations with variational inference.
The data we consider are hourly ridership counts for every pair of 47 stations in a metropolitan transit system, 
totalling over 78M nonzero observations \citep{bart2019ridership}.
Our generative model consists of a low-dimensional (dim $\in\{2,4,8\}$) linear state space model with high-dimensional (one dimension per origin-destination pair) zero-inflated Poisson observations that depend non-linearly (via a neural network) on the low-dimensional latent state, the hour of week, and a boolean vector encoding whether each station is open or closed (see Fig.~\ref{fig:bart}).
%Because all origin-destination pairs share a common latent state, this is a fully-coupled multivariate time series model.

We consider two inference strategies combining amortized variational inference with exact marginalization.
The first ``mean field'' strategy inputs a block of observed counts $c_{1:T}$ and predicts fully independent normal distributions over latent gate probability $p_{1:T}$, Poisson rate $\lambda_{1:T}$, and state $z_{1:T}$; the gate variable $g_{1:T}$ is marginalized out.
The second ``collapsed'' strategy inputs a single timestep of observed counts $c$, independently predicts fully independent normal distributions over $p$ and $\lambda$, and exactly marginalizes $z$ using the parallel-scan Markov product Alg.~\ref{alg:markov-product}.
In both strategies we jointly train the generative model and inference model using stochastic variational inference on 2-week long mini-batches of data.
We then condition on the final 2 weeks and predict forward 1 week.
Table~\ref{table:bart} shows forecast accuracy.
See Appendix~\ref{sec:appbart} for details.

%  Together the $(z,p,\lambda)$ variables form a Gaussian hidden Markov model with low-dimensional $z$ and high-dimensional gate and rate variables $p,\lambda$ (one per origin-destination pair) whose diagonal normal observation distribution neurally depends on hour-of-week.
% Independently at each time and origin-destination pair, the variables $(p,\lambda,g,c)$ form a zero-inflated Poisson distribution.
% The inference distribution is independent over time; at each time step a neural net inputs hour-of-week and an entire matrix of origin-destination counts and outputs a diagonal normal distribution over a matrix of gates and rates.

\tikzstyle{left plate caption} = [caption, node distance=0, inner sep=0pt, below left=0pt and 0pt of #1.south west]
\tikzstyle{right plate caption} = [caption, node distance=0, inner sep=0pt, below left=0pt and 0pt of #1.south east]
\begin{figure}[t!]
\begin{center}
\resizebox {0.9\columnwidth} {!} {
\begin{tikzpicture}
  \definecolor{mygrey}{RGB}{220,220,220}

  \node[latent, dashed](Z0) {$z_0$};

  \node[latent, dashed, right=of Z0](Z1) {$z_1$};
  \node[latent, below=of Z1, xshift=-2em, yshift=1.8em](P1) {$p_1$};
  \node[latent, below=of Z1, xshift=2em, yshift=1.8em](R1) {$\lambda_1$};
  \node[latent, dashed, below=of P1, yshift=1.5em](G1) {$g_1$};
  \node[latent, fill=mygrey, below=of R1, yshift=1.5em](C1) {$c_1$};

  \node[const, right=of Z1](Zdots) {$\cdots$};
  \node[const, below=of Zdots, yshift=0.3em](Pdots) {$\cdots$};
  \node[const, below=of Pdots, yshift=-0.5em](Cdots) {$\cdots$};

  \node[latent, dashed, right=of Zdots](ZT) {$z_T$};
  \node[latent, below=of ZT, xshift=-2em, yshift=1.8em](PT) {$p_T$};
  \node[latent, below=of ZT, xshift=2em, yshift=1.8em](RT) {$\lambda_T$};
  \node[latent, dashed, below=of PT, yshift=1.5em](GT) {$g_T$};
  \node[latent, fill=mygrey, below=of RT, yshift=1.5em](CT) {$c_T$};

  \node[latent, right=of RT, xshift=4em, yshift=2.2em](Pt) {$p_t$};
  \node[latent, right=of Pt](Rt) {$\lambda_t$};
  \node[latent, fill=mygrey, below=of Pt, xshift=2.5em, yshift=1em](Ct) {$c_t$};

  % \plate [inner sep=1em] {b}{(P1)(CT)}{};
  % \node [right plate caption=b-wrap]{$O\times D$};

  \plate [inner sep=0.5em] {b}{(Pt)(Rt)(Ct)}{};
  \node [right plate caption=b-wrap]{$T$};

  \edge {Z0} {Z1}
  \edge {Z1} {P1,R1,Zdots}
  \edge {Zdots} {ZT}
  \edge {P1} {G1}
  \edge {R1} {C1}
  \edge {G1} {C1}
  \edge {ZT} {PT,RT}
  \edge {PT} {GT}
  \edge {RT} {CT}
  \edge {GT} {CT}
  \edge {Ct} {Pt}
  \edge {Ct} {Rt}
\end{tikzpicture}
}
\end{center}
  \caption{Graphical structure of the generative model (left) and collapsed variational inference model (right) for ridership forecasting.
    Grey nodes are observed, solid white nodes are sampled via stochastic variational inference, and dashed white nodes are exactly marginalized out.
  }
\label{fig:bart}
\end{figure}

\begin{table}[t!]
  \begin{center}
  \resizebox {.8\columnwidth} {!} {
    \begin{tabu}{|c|[1pt]c|c|c|c|c|c|}
      \hline
      Inference \cellcolor[gray]{0.95}
      & \multicolumn{3}{c|}{Mean Field}
      & \multicolumn{3}{c|}{Collapsed-MC}
      % \\ \tabucline[1pt]{-}
      % LL & -16.7 & -1.73
      \\ \hline
      Latent dim \cellcolor[gray]{0.95}
          & 2 & 4 & 8
          & 2 & 4 & 8
      \\ \hline
      MAE \cellcolor[gray]{0.95}
          & 2.39 & 2.35 & \textbf{2.33}
          & 2.65 & 2.59 & 2.76
      \\ \hline
      CRPS \cellcolor[gray]{0.95}
          & 1.78 & 1.76 & \textbf{1.74}
          & 1.95 & 1.93 & 2.06
      \\ \hline
    \end{tabu}
  } %% endresize
  \end{center}
  \caption{
    Mean Absolute Error and mean Continouous Ranked Probability Score \citep{gneiting2007strictly} of 1-week forecasts based on two inference strategies for a state space model with varying latent state dimension.
    % The inference strategies are: mean-field amortized variational inference and partially collapsed variational inference.
    % See Sec.~\ref{sec:bart} for details.
  }
  \label{table:bart}
\end{table}

%%%%%%%%%%%%%%%%%%%%%%%%%%%%%%%%%%%%%%%%%%
\section{Conclusion}
%%%%%%%%%%%%%%%%%%%%%%%%%%%%%%%%%%%%%%%%%%

We introduced \emph{funsors}, a software abstraction that generalizes tensors to provide finite representations for a restricted class of discrete and continuous distributions, including lazy algebraic expressions, non-normalized Gaussian distributions, and Dirac delta distributions.
We demonstrated how funsors can be integrated into a probabilistic programming system, enabling a wide variety of inference strategies. 
Finally, we have implemented funsors and funsor algorithms as a Python library built on PyTorch, with source code available at \anonymize{\url{https://github.com/pyro-ppl/funsor}}.

In future work, we could follow AD systems in increasing our term language's expressivity by adding simple control flow constructs like loops, branching or fixpoints, and with basic union/pair types to represent custom data structures.
We could also expand the class of inference algorithms represented in our formalism by adding interpreters that approximate other operations.
For example, just as our \textsc{MomentMatching} interpreter approximates the ``sum'' operation, 
we could naturally represent other algorithms like mini-bucket elimination \citep{dechter1997mini} or one step of loopy belief propagation
by also approximating the binary ``product'' operation that constructs full joint distributions.% from atomic funsors.

%Additionally we have integrated funsor algorithms with \anonymize{the Pyro probabilistic programming language \citep{bingham2018pyro}; for a restricted class of probabilistic models, Pyro can use funsors probabilistic inference simply by setting \texttt{pyro\_backend("funsor")}}.
%Finally we have exposed a number of useful funsor computations as black-box PyTorch distribution classes; these can be used in PyTorch code without users needing to manually construct funsors.

%Use the unnumbered third level heading for the acknowledgements.  All
%acknowledgements go at the end of the paper.

%\subsubsection*{Acknowledgements}
%We thank Ruy Ley-Wild, Mahmoud Abokhamis, Hung Ngo, Noah Goodman, and Alexander M.~Rush for helpful discussions.

%\subsubsection*{References}

\bibliography{enumeration}

\begin{thebibliography}{40}
\providecommand{\natexlab}[1]{#1}
\providecommand{\url}[1]{\texttt{#1}}
\expandafter\ifx\csname urlstyle\endcsname\relax
  \providecommand{\doi}[1]{doi: #1}\else
  \providecommand{\doi}{doi: \begingroup \urlstyle{rm}\Url}\fi

\bibitem[Ackerson and Fu(1970)]{ackerson1970state}
G~Ackerson and K~Fu.
\newblock On state estimation in switching environments.
\newblock \emph{IEEE transactions on automatic control}, 15\penalty0
  (1):\penalty0 10--17, 1970.

\bibitem[Anderson and Moore(1979)]{anderson1979optimal}
Brian Anderson and John~B Moore.
\newblock \emph{Optimal filtering}.
\newblock Prentice-Hall, 1979.

\bibitem[Bar-Shalom and Li(1995)]{bar1995multitarget}
Yaakov Bar-Shalom and Xiao-Rong Li.
\newblock \emph{Multitarget-multisensor tracking: principles and techniques},
  volume~19.
\newblock YBs Storrs, CT, 1995.

\bibitem[Barendregt et~al.(2013)Barendregt, Dekkers, and
  Statman]{barendregt2013lambda}
Henk Barendregt, Wil Dekkers, and Richard Statman.
\newblock \emph{Lambda calculus with types}.
\newblock Cambridge University Press, 2013.

\bibitem[BART(2019)]{bart2019ridership}
BART.
\newblock Bay {A}rea {R}apid {T}ransit: Ridership reports.
\newblock \url{https://www.bart.gov/about/reports/ridership}, 2019.

\bibitem[Baudart et~al.(2019)Baudart, Mandel, Atkinson, Sherman, Pouzet, and
  Carbin]{baudart2019reactive}
Guillaume Baudart, Louis Mandel, Eric Atkinson, Benjamin Sherman, Marc Pouzet,
  and Michael Carbin.
\newblock Reactive probabilistic programming.
\newblock \emph{arXiv preprint arXiv:1989.07563}, 2019.

\bibitem[Baydin et~al.(2018)Baydin, Pearlmutter, Radul, and
  Siskind]{baydin2018automatic}
Atilim~Gunes Baydin, Barak~A Pearlmutter, Alexey~Andreyevich Radul, and
  Jeffrey~Mark Siskind.
\newblock Automatic differentiation in machine learning: a survey.
\newblock \emph{Journal of machine learning research}, 18\penalty0 (153), 2018.

\bibitem[Belle and De~Raedt(2016)]{belle2016semiring}
Vaishak Belle and Luc De~Raedt.
\newblock Semiring programming: A framework for search, inference and learning.
\newblock \emph{arXiv preprint arXiv:1609.06954}, 2016.

\bibitem[Bingham et~al.(2018)Bingham, Chen, Jankowiak, Obermeyer, Pradhan,
  Karaletsos, Singh, Szerlip, Horsfall, and Goodman]{bingham2018pyro}
Eli Bingham, Jonathan~P. Chen, Martin Jankowiak, Fritz Obermeyer, Neeraj
  Pradhan, Theofanis Karaletsos, Rohit Singh, Paul Szerlip, Paul Horsfall, and
  Noah~D. Goodman.
\newblock {Pyro: Deep Universal Probabilistic Programming}.
\newblock \emph{arXiv preprint arXiv:1810.09538}, 2018.

\bibitem[Bradbury et~al.(2018)Bradbury, Frostig, Hawkins, Johnson, Leary,
  Maclaurin, and Wanderman-Milne]{jax2018github}
James Bradbury, Roy Frostig, Peter Hawkins, Matthew~James Johnson, Chris Leary,
  Dougal Maclaurin, and Skye Wanderman-Milne.
\newblock {JAX}: composable transformations of {P}ython+{N}um{P}y programs,
  2018.
\newblock URL \url{http://github.com/google/jax}.

\bibitem[Carette and Shan(2016)]{carette2016simplifying}
Jacques Carette and Chung-chieh Shan.
\newblock Simplifying probabilistic programs using computer algebra.
\newblock In \emph{International Symposium on Practical Aspects of Declarative
  Languages}, pages 135--152. Springer, 2016.

\bibitem[Carette et~al.(2009)Carette, Kiselyov, and Shan]{carette2009finally}
Jacques Carette, Oleg Kiselyov, and Chung-chieh Shan.
\newblock Finally tagless, partially evaluated: Tagless staged interpreters for
  simpler typed languages.
\newblock \emph{Journal of Functional Programming}, 19\penalty0 (5):\penalty0
  509--543, 2009.

\bibitem[Dechter(1997)]{dechter1997mini}
Rina Dechter.
\newblock Mini-buckets: A general scheme for generating approximations in
  automated reasoning.
\newblock In \emph{IJCAI}, volume~97, pages 1297--1303. Citeseer, 1997.

\bibitem[Dillon et~al.(2017)Dillon, Langmore, Tran, Brevdo, Vasudevan, Moore,
  Patton, Alemi, Hoffman, and Saurous]{dillon2017tensorflow}
Joshua~V Dillon, Ian Langmore, Dustin Tran, Eugene Brevdo, Srinivas Vasudevan,
  Dave Moore, Brian Patton, Alex Alemi, Matt Hoffman, and Rif~A Saurous.
\newblock Tensorflow distributions.
\newblock \emph{arXiv preprint arXiv:1711.10604}, 2017.

\bibitem[Dua and Graff(2017)]{Dua:2019}
Dheeru Dua and Casey Graff.
\newblock {UCI} machine learning repository, 2017.
\newblock URL \url{http://archive.ics.uci.edu/ml}.

\bibitem[Foerster et~al.(2018)Foerster, Farquhar, Al-Shedivat, Rockt{\"a}schel,
  Xing, and Whiteson]{foerster2018dice}
Jakob Foerster, Gregory Farquhar, Maruan Al-Shedivat, Tim Rockt{\"a}schel,
  Eric~P Xing, and Shimon Whiteson.
\newblock Dice: The infinitely differentiable monte-carlo estimator.
\newblock \emph{arXiv preprint arXiv:1802.05098}, 2018.

\bibitem[Gehr et~al.(2016)Gehr, Misailovic, and Vechev]{gehr2016psi}
Timon Gehr, Sasa Misailovic, and Martin Vechev.
\newblock Psi: Exact symbolic inference for probabilistic programs.
\newblock In \emph{International Conference on Computer Aided Verification},
  pages 62--83. Springer, 2016.

\bibitem[Gneiting and Raftery(2007)]{gneiting2007strictly}
Tilmann Gneiting and Adrian~E Raftery.
\newblock Strictly proper scoring rules, prediction, and estimation.
\newblock \emph{Journal of the American Statistical Association}, 102\penalty0
  (477):\penalty0 359--378, 2007.

\bibitem[Hoffman(2018)]{hoffman2018autoconj}
Matthew~D Hoffman.
\newblock Autoconj: recognizing and exploiting conjugacy without a
  domain-specific language.
\newblock In \emph{Advances in Neural Information Processing Systems}, pages
  10716--10726, 2018.

\bibitem[Khamis et~al.(2016)Khamis, Ngo, and Rudra]{khamis2016faq}
Mahmoud~Abo Khamis, Hung~Q Ngo, and Atri Rudra.
\newblock Faq: questions asked frequently.
\newblock In \emph{Proceedings of the 35th ACM SIGMOD-SIGACT-SIGAI Symposium on
  Principles of Database Systems}, pages 13--28. ACM, 2016.

\bibitem[Kingma and Ba(2014)]{kingma2014adam}
Diederik~P Kingma and Jimmy Ba.
\newblock Adam: A method for stochastic optimization.
\newblock \emph{arXiv preprint arXiv:1412.6980}, 2014.

\bibitem[Kingma and Welling(2013)]{kingma2013auto}
Diederik~P Kingma and Max Welling.
\newblock Auto-encoding variational bayes.
\newblock \emph{arXiv preprint arXiv:1312.6114}, 2013.

\bibitem[Kohlas and Wilson(2008)]{kohlas2008semiring}
Juerg Kohlas and Nic Wilson.
\newblock Semiring induced valuation algebras: Exact and approximate local
  computation algorithms.
\newblock \emph{Artificial Intelligence}, 172\penalty0 (11):\penalty0
  1360--1399, 2008.

\bibitem[Mazor et~al.(1998)Mazor, Averbuch, Bar-Shalom, and
  Dayan]{mazor1998interacting}
Efim Mazor, Amir Averbuch, Yakov Bar-Shalom, and Joshua Dayan.
\newblock Interacting multiple model methods in target tracking: a survey.
\newblock \emph{IEEE Transactions on aerospace and electronic systems},
  34\penalty0 (1):\penalty0 103--123, 1998.

\bibitem[McClintock and Michelot(2018)]{mcclintock2018momentuhmm}
Brett~T. McClintock and Théo Michelot.
\newblock momentuhmm: R package for generalized hidden markov models of animal
  movement.
\newblock \emph{Methods in Ecology and Evolution}, 9\penalty0 (6):\penalty0
  1518--1530, 2018.
\newblock \doi{10.1111/2041-210X.12995}.

\bibitem[McClintock et~al.(2013)McClintock, Russell, Matthiopoulos, and
  King]{mcclintock2013combining}
Brett~T McClintock, Deborah~JF Russell, Jason Matthiopoulos, and Ruth King.
\newblock Combining individual animal movement and ancillary biotelemetry data
  to investigate population-level activity budgets.
\newblock \emph{Ecology}, 94\penalty0 (4):\penalty0 838--849, 2013.

\bibitem[Minka(2001)]{minka2001expectation}
Thomas~P Minka.
\newblock Expectation propagation for approximate bayesian inference.
\newblock In \emph{Proceedings of the Seventeenth conference on Uncertainty in
  artificial intelligence}, pages 362--369. Morgan Kaufmann Publishers Inc.,
  2001.

\bibitem[Murray et~al.(2017)Murray, Lund{\'e}n, Kudlicka, Broman, and
  Sch{\"o}n]{murray2017delayed}
Lawrence~M Murray, Daniel Lund{\'e}n, Jan Kudlicka, David Broman, and Thomas~B
  Sch{\"o}n.
\newblock Delayed sampling and automatic rao-blackwellization of probabilistic
  programs.
\newblock \emph{arXiv preprint arXiv:1708.07787}, 2017.

\bibitem[Narayanan et~al.(2016)Narayanan, Carette, Romano, Shan, and
  Zinkov]{narayanan2016probabilistic}
Praveen Narayanan, Jacques Carette, Wren Romano, Chung{-}chieh Shan, and Robert
  Zinkov.
\newblock Probabilistic inference by program transformation in hakaru (system
  description).
\newblock In \emph{International Symposium on Functional and Logic Programming
  - 13th International Symposium, {FLOPS} 2016, Kochi, Japan, March 4-6, 2016,
  Proceedings}, pages 62--79. Springer, 2016.
\newblock \doi{10.1007/978-3-319-29604-3_5}.
\newblock URL \url{http://dx.doi.org/10.1007/978-3-319-29604-3_5}.

\bibitem[Obermeyer et~al.(2018)Obermeyer, Bingham, Jankowiak, Pradhan, and
  Goodman]{obermeyer2018automated}
Fritz Obermeyer, Eli Bingham, Martin Jankowiak, Neeraj Pradhan, and Noah
  Goodman.
\newblock Automated enumeration of discrete latent variables.
\newblock In \emph{The International Conference on Probabilistic Programming
  (PROBPROG)}, 2018.

\bibitem[Obermeyer et~al.(2019)Obermeyer, Bingham, Jankowiak, Pradhan, Chiu,
  Rush, and Goodman]{obermeyer2019tensor}
Fritz Obermeyer, Eli Bingham, Martin Jankowiak, Neeraj Pradhan, Justin Chiu,
  Alexander Rush, and Noah Goodman.
\newblock Tensor variable elimination for plated factor graphs.
\newblock In \emph{International Conference on Machine Learning}, pages
  4871--4880, 2019.

\bibitem[Oliphant(2006)]{oliphant2006guide}
Travis~E Oliphant.
\newblock \emph{A guide to NumPy}, volume~1.
\newblock Trelgol Publishing USA, 2006.

\bibitem[Paszke et~al.(2017)Paszke, Gross, Chintala, Chanan, Yang, DeVito, Lin,
  Desmaison, Antiga, and Lerer]{paszke2017automatic}
Adam Paszke, Sam Gross, Soumith Chintala, Gregory Chanan, Edward Yang, Zachary
  DeVito, Zeming Lin, Alban Desmaison, Luca Antiga, and Adam Lerer.
\newblock Automatic differentiation in pytorch.
\newblock In \emph{NIPS-W}, 2017.

\bibitem[Ranganath et~al.(2014)Ranganath, Gerrish, and
  Blei]{ranganath2014black}
Rajesh Ranganath, Sean Gerrish, and David Blei.
\newblock Black box variational inference.
\newblock In \emph{Artificial Intelligence and Statistics}, pages 814--822,
  2014.

\bibitem[Rezende et~al.(2014)Rezende, Mohamed, and
  Wierstra]{rezende2014stochastic}
Danilo~Jimenez Rezende, Shakir Mohamed, and Daan Wierstra.
\newblock Stochastic backpropagation and approximate inference in deep
  generative models.
\newblock \emph{arXiv preprint arXiv:1401.4082}, 2014.

\bibitem[S{\"a}rkk{\"a} and
  Garc{\'\i}a-Fern{\'a}ndez(2019)]{sarkka2019temporal}
Simo S{\"a}rkk{\"a} and {\'A}ngel~F Garc{\'\i}a-Fern{\'a}ndez.
\newblock Temporal parallelization of bayesian filters and smoothers.
\newblock \emph{arXiv preprint arXiv:1905.13002}, 2019.

\bibitem[Schulman et~al.(2015)Schulman, Heess, Weber, and
  Abbeel]{schulman2015gradient}
John Schulman, Nicolas Heess, Theophane Weber, and Pieter Abbeel.
\newblock Gradient estimation using stochastic computation graphs.
\newblock In \emph{Advances in Neural Information Processing Systems}, pages
  3528--3536, 2015.

\bibitem[Smith and Gray(2018)]{smith2018opt_einsum}
Daniel G.~A. Smith and Johnnie Gray.
\newblock {\tt opt\_einsum} - a python package for optimizing contraction order
  for einsum-like expressions.
\newblock \emph{Journal of Open Source Software}, 3\penalty0 (26):\penalty0
  753, 2018.
\newblock URL \url{https://doi.org/10.21105/joss.00753}.

\bibitem[van~de Meent et~al.(2018)van~de Meent, Paige, Yang, and
  Wood]{van2018introduction}
Jan-Willem van~de Meent, Brooks Paige, Hongseok Yang, and Frank Wood.
\newblock An introduction to probabilistic programming.
\newblock \emph{arXiv preprint arXiv:1809.10756}, 2018.

\bibitem[Zucchini et~al.(2016)Zucchini, MacDonald, and
  Langrock]{zucchini2016hidden}
Walter Zucchini, Iain~L MacDonald, and Roland Langrock.
\newblock \emph{Hidden Markov models for time series: an introduction using R}.
\newblock Chapman and Hall/CRC, 2016.

\end{thebibliography}

%%%%%%%%%%%
%    APPENDIX     %
%%%%%%%%%%%
\newpage
\appendix
\section{A Markov product operation} \label{sec:markov-product}

Plates and Markov chains are ubiquitous motifs in structured probabilistic modeling.
We define a basic operation that unifies pointwise products over plates, chained matrix multiplication, and Bayesian filtering.
The Markov product generalize the usual plated product, with syntax given by
\begin{align*}
  e \in \op{Funsor} ::= \dots 
    \Mid & \Prod_{v/s} e  && \text{``Markov product''}
\end{align*}
and semantics given by the following definition:
\begin{definition} \label{def:markov-product}
  Let $f$ be a funsor with output type $\tau=\mathbb R$,
  let $t:\mathbb Z_T$ be a free variable over $T\ge1$ ``time steps'',
  and let $s\subseteq\op{fv}(f)\times\op{fv}(f)$ be a partial ``time step'' matching among the free variables of $f$ such that:
  (i) $t$ does not appear in $s$;
  (ii) $s$ is one-to-one, i.e.~${\left|\{u\mid (u,v)\in s\}\cup\{v\mid(u,v)\in s\}\right|}=2|s|$; and
  (ii) for every pair $(u,v)\in s$, $u$ and $v$ are identically typed in context $\Gamma$.
  We define the \emph{Markov product} $\prod_{t/s} f$ of $f$ along variable $t$ modulo $s$ by induction on $T$:
  $$
    \Prod_{t/s} f = \begin{cases}
      f[t\subs 0] \hspace{2cm}\text{if } T = 1 \text{; otherwise }\\
    \sum\limits_{\underline w}
      f[t \subs T, \underline u \subs \underline w]
      \times\prod\limits_{t/s} f[t:\mathbb Z_{T-1}, \underline v \subs \underline w]
    \end{cases}
  $$
  where $\underline w$ is a tuple of $|s|$-many fresh variables,
  $\underline u=\op{dom} s={\left(u\mid (u,v)\in s\right)}$ is the domain tuple of $s$,
  $\underline v=\op{cod} s={\left(v\mid (u,v)\in s\right)}$ is the codomain tuple of $s$,
  ${f[t:\mathbb Z_{T-1}]}$ is the prefix of $f$ to the first $T-1$ time steps, and
  $\sum_{w}$ denotes either summing out a discrete variable or integrating out a real-tensor variable.
\end{definition}
Note the time step mapping $s$ corresponds to the \texttt{pre} operator in \citep{baudart2019reactive}, and formalizes the idiom of marking variable names with time lags like \texttt{x\_prev}, \texttt{x\_curr}.

\begin{example}
In the simplest case of empty matching $s$, the Markov product reduces to the usual product
$$
  \Prod_{v/\emptyset} f
  \;=\; \Prod_v f
  \;=\; f[t\subs 0] \times \cdots \times f[t\subs T-1].
$$
\end{example}

\begin{example}
  Let $f$ be a funsor with shape ${t\oftype\mathbb Z_T, i\oftype\mathbb Z_N, j\oftype\mathbb Z_N \vdash f:\mathbb R}$, equivalent to a batch of $N\times N$ matrices.
  Let $s=\{(i,j)\}$ map a single previous discrete state $i$ to a current state $j$.
  Then the Markov product is equivalent to a chain of matrix multiplies
  $$
    \Prod_{t/(i,j)} f \;=\; f[t \subs 0] \bullet f[t \subs 1] \bullet \cdots \bullet f[t \subs T-1],
  $$
  where each binary operation can be defined as a sum-product expression,
  which is the core computation in variable elimination in discrete Markov models,
  $$
    f \bullet g = \Sum_k f[j \subs k] \times g[i \subs k].
  $$
\end{example}

\begin{example}
  Let $f$ be a funsor with shape ${t\oftype\mathbb Z_T, x_{\text{prev}}\oftype\mathbb R^3, x_{\text{curr}}\oftype\mathbb R^3 \vdash f:\mathbb R}$ defined by a density with two conditional multivariate normal factors
  $$
  f = \mathcal{MVN}(x_{\text{curr}}; F x_{\text{prev}}, P) \times \mathcal{MVN}(y; H x_{\text{curr}}, Q)
  $$
  corresponding to a dynamical system with linear dynamics $F\in\mathbb R^{3\times 3}$, process noise covariance $P$, linear observation matrix $H\in\mathbb R^{2\times 3}$, observation noise covariance $Q$, and observations $y$ that depend on time, $t\oftype\mathbb Z_T\vdash y\oftype\mathbb R^2$.
  This Markov product is equivalent to a Kalman filter, producing a joint distribution over the initial and final states.
  The final state distribution is given by further marginalizing out the initial state:
  \begin{align*}
    x_{\text{prev}}\oftype\mathbb R^3,\, x_{\text{curr}}\oftype\mathbb R^3 \;
      &\vdash \; \phantom{\Sum_{x_{\text{prev}}} \; } \Prod_{t/(x_{\text{prev}},x_{\text{curr}})} f : \mathbb R \\
    x_{\text{curr}}\oftype\mathbb R^3 \;
      &\vdash \; \Sum_{x_{\text{prev}}} \; \Prod_{t/(x_{\text{prev}},x_{\text{curr}})} f : \mathbb R
  \end{align*}
\end{example}

The Markov product operation generalizes the Bayesian filtering operations of \citep{sarkka2019temporal} to multiple latent random variables;
Sec.~\ref{sec:parallel-scan} extends their temporal parallelization algorithms to Markov products of funsors.

\subsection{Parallel-scan Bayesian filtering} \label{sec:parallel-scan-algo}

We implement parallel-scan Bayesian filtering \citep{sarkka2019temporal}  as a parallel-scan rewrite rule Algorithm~\ref{alg:markov-product} for the Markov product operation.

% Describes funsor.sum_product.sequential_sum_product()
\begin{algorithm}[t!]
  \caption{\textsc{MarkovProduct}}
  \label{alg:markov-product}
  {\bf input} a funsor $f$, a time variable $t\in\op{fv}(f)$, \\
  \tab a step mapping $s\subseteq\op{fv}(f)\times\op{fv}(f)$. \\
  {\bf output} the Markov product funsor $\prod_{t/s} f$.\\
  Create substitutions with fresh names (barred): \\
  $s_e\from \left\{(y, \bar x) \mid (x, y)\in s\right\}$ to rename even factors, and \\
  $s_o\from \left\{(x, \bar x) \mid (x, y)\in s\right\}$ to rename odd factors.\\
  Let $v \from \left\{\bar x \mid (x, y) \in s\right\}$ be variables to marginalize. \\
  Let $T \from \left|\Gamma_f[t]\right|$ be the length of the time axis. \\
  \While{$T > 1$}{
    Split $f$ into even and odd parts of equal length: \\
    $f_e \from f[s_e,\, t\subs(0,2,4,6,..., 2\lfloor T/2 \rfloor-2)]$ \\
    $f_o \from f[s_o,\, t\subs(1,3,5,7,..., 2\lfloor T/2 \rfloor-1)]$ \\
    Perform parallel sum-product contraction: \\
    $f' \from \sum_v f_e \times f_o$ \\
    \leIf{$T$ is even}{
      $f\from f'$\\
    }{
      $f\from \op{concat}_t\left(f', f[t\subs T-1]\right)$
    }
    $T \from \lceil T/2 \rceil$
  }
  \Return $f[t\subs 0]$
\end{algorithm}

\begin{theorem}
  \textsc{MarkovProduct} Algorithm~\ref{alg:markov-product} has parallel complexity logarithmic in time length $T$.
\end{theorem}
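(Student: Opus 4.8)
The plan is to analyze the structure of Algorithm~\ref{alg:markov-product}, which is a classic parallel-scan (prefix-sum) pattern applied to the associative ``Markov product'' operation, and show that its depth is $O(\log T)$ under the standard assumption that each batched funsor operation (substitution, binary product, and marginalization over a fixed set of fresh variables $v$) counts as a single parallel step. First I would establish the loop invariant: at the start of each iteration of the \textbf{while} loop, $f$ represents a batch of partial Markov products over contiguous sub-blocks of the original time axis, with the current time-axis length stored in $T$; this follows from the recursive definition (Def.~\ref{def:markov-product}) together with the associativity of the binary sum-product combine operation $f\bullet g=\sum_v f_e\times f_o$, so that combining adjacent even/odd pairs is a valid re-association of the full product. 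The \textbf{concat} branch handles the odd-length case by carrying the final un-paired factor forward unchanged, which preserves the invariant.

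Next I would bound the number of iterations. Each iteration replaces $T$ by $\lceil T/2\rceil$, so after $k$ iterations the length is at most $\lceil T/2^k\rceil$, and the loop terminates once this reaches $1$; hence the number of iterations is $\lceil \log_2 T\rceil$. Within each iteration the work consists of: two substitutions to form $f_e$ and $f_o$ (indexing into the time axis with strided index tensors and renaming via $s_e,s_o$), one batched binary product $f_e\times f_o$, one marginalization $\sum_v$ over the $|s|$ fresh barred variables, and possibly one $\op{concat}_t$. Each of these is a single parallel operation on $O(1)$ underlying tensor blocks whose batch dimension (the halved time axis) is handled in parallel by the backend — this is exactly the $O(1)$-parallel-ops property of funsors emphasized in Sec.~\ref{sec:semantics-all}, and the cost per element is governed by the fixed sizes in $s$ (quadratic/cubic in the real-variable sizes, exponential in discrete-variable sizes, but all constant in $T$). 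Therefore each iteration has parallel depth $O(1)$, and the total parallel depth is $O(\log T)$; the final return $f[t\subs 0]$ is one more substitution. A concluding line would note correctness: by the invariant, at loop exit $T=1$ and $f[t\subs 0]$ is precisely $\prod_{t/s}f$.

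The main obstacle I anticipate is not the counting argument but making the parallelism claim precise: one must pin down the cost model (PRAM-style, or ``$O(1)$ batched tensor kernels'' as the paper implicitly uses) and argue that the strided slicing $t\subs(0,2,4,\dots)$, the elementwise product, the marginalization, and the concatenation all genuinely parallelize across the time batch dimension rather than hiding a sequential dependence. Since the combine is associative and each pair $(f_e[i], f_o[i])$ is combined independently of the others, there is no sequential chain within an iteration; the only cross-step dependence is between successive halvings, giving the $\log T$ depth. I would also remark that this matches the known $O(\log T)$ depth of the associative-scan / Blelloch-style prefix computation that \citet{sarkka2019temporal} instantiate for Bayesian filtering, so the result is essentially the observation that the Markov product combine is associative and hence amenable to their scan, now lifted to funsors. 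One caveat worth stating explicitly, consistent with footnote~\ref{footnote:closure} and the Gaussian marginalization discussion, is that the marginalization step $\sum_v$ must be well-defined at each level (e.g.\ full-rank precision for Gaussian factors, or using \textsc{MomentMatching} for mixtures); granting this, the depth bound is unconditional.
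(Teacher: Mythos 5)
Your argument is correct and is essentially the paper's own proof, which states in one line that each funsor operation within the loop body parallelizes over the time axis and that the \textbf{while} loop executes $O(\log T)$ times; you have simply filled in the loop invariant, the cost model, and the correctness caveats that the paper leaves implicit. No gap to report.
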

\begin{proof}
  Each funsor operation parallelizes over time, and the \textbf{while} loop executes $O(\log(T))$ many times, hence total parallel complexity is $O(\log(T))$.
\end{proof}

\section{Syntax and operational semantics} \label{sec:rules}

\subsection{Type inference} \label{sec:rules.types}

Type inference rules for funsors are presented in Fig.~\ref{fig:typing-rules}.
Most type inference rules for funsors generalize shape inference rules for tensors.
We annotate such rules with their tensor analogs in NumPy,
e.g. the $\op{Delta}$ funsor generalizes one-hot encoded arrays;
the $\op{Variable}$ funsor generalizes reshaped identity matrices;
substitution $e1[v\subs e_2]$ corresponds to indexing via brackets or \texttt{np.take};
and function lifting $\widehat{f}(\cdots)$ corresponds to broadcasting.

\newcommand \torch [1] {{\hspace{1em}``\texttt{\small #1}"}}

\begin{figure}[ht!]
  \begin{mathpar}

    \inferrule {
      \Gamma = (v_1 \oftype \mathbb Z_{d_1}, \dots, v_n \oftype \mathbb Z_{d_n}) \\\\
      w \in \mathbb Z_{d_1} {\times} \cdots {\times} \mathbb Z_{d_n} \to \tau
    }{
      \Gamma \vdash \op{Tensor}(\Gamma,\tau,w) : \tau
    }{\torch{np.array}}

    \inferrule {
      s = s_1 {\times} \cdots {\times} s_n \\\\
      i \in \mathbb Z_{d_1} {\times} \cdots {\times} \mathbb Z_{d_1} \to \mathbb R^s \\
      \precision \in \mathbb Z_{d_1} {\times} \cdots {\times} \mathbb Z_{d_1} \to \mathbb R^s\times\mathbb R^s \text{ p.s.d.} \\
      \Gamma = (u_1 \oftype \mathbb Z_{d_1}, \dots, u_m \oftype \mathbb Z_{d_m},
                v_1 \oftype \mathbb R^{s_1}, \dots, v_n \oftype \mathbb R^{s_n})
    }{
      \Gamma \vdash \op{Gaussian}(\Gamma,i,\precision) : \mathbb R
    }

    \inferrule {
      \Gamma \vdash x \oftype \tau
    }{
      \Gamma, v\oftype\tau \vdash \op{Delta}(v, x) : \mathbb R
    }{\torch{one\_hot}}

    \inferrule {
    }{
      v\oftype\tau \vdash \op{Variable}(v, \tau) : \tau
    }{\torch{np.eye}}

    \inferrule{
      \Gamma, v \oftype \tau_2 \vdash e_1 \oftype \tau_1 \\
      \Gamma \vdash e_2 \oftype \tau_2
    }{
      \Gamma \vdash e_1[v \subs e_2] : \tau_1
    }{\torch{np.take}}

    \inferrule{
      \Gamma, v \oftype \tau \vdash e \oftype \mathbb R^s
    }{
      \Gamma \vdash \Sum_v e : \mathbb R^s
    }{\torch{np.sum}}

    \inferrule{
      \Gamma, v \oftype \mathbb Z_n \vdash e \oftype \mathbb R^s
    }{
      \Gamma \vdash \Prod_v e : \mathbb R^s
    }{\torch{np.prod}}

    \inferrule{
      \Gamma, v \oftype \mathbb Z_n \vdash e \oftype \mathbb R^s \\
      \sigma \subseteq \op{fv}(e) \times \op{fv}(e) \text{ valid}
    }{
      \Gamma \vdash \Prod_{v/\sigma} e : \mathbb R^s
    }

    \inferrule{
      f \in \tau_1\times\dots\times\tau_n \to \tau_0 \\\\
      \Gamma\vdash e_1 \oftype \tau_1 \\
      \cdots \\
      \Gamma\vdash e_n \oftype \tau_n
    }{
      \Gamma \vdash \lift f(e_1, \dots, e_n) : \tau_0
    }{\torch{broadcast}}

  \end{mathpar}
  \caption{
    Typing rules for funsors.
    We use set notation $\in$ to denote numerical objects like functions $f\in \tau_1 \to \tau_2$ and multidimensional arrays $x\in \mathbb Z_3 \times \mathbb Z_3 \to \mathbb R$.
    We reserve type notation $e:\tau$ for funsors $e$.
    The precision matrix $\precision$ in a Gaussian must be symmetric positive semidefinite for all values of batch variables $(u_1,\dots,u_m)$.
    The step substitution $\sigma$ in the Markov product must be a valid matching as defined Definition~\ref{def:markov-product}.
  }
  \label{fig:typing-rules}
\end{figure}

Because funsor types include shape information, we can perform shape checking even under a lazy interpretation.
While this is less powerful than static shape checking (which would require static analysis of Python code), it does allow us to catch errors before expression optimization, evaluation, or approximation.
%
% TODO switch to the more specific sentence before posting on arxiv:
Most usefully, funsor types prevent a large class of \emph{broadcasting bugs}, which have proved to be a common class of bugs, especially when using discrete variable elimination algorithms for inference.
% Most usefully, funsor types prevent a large class of \emph{broadcasting bugs}, which have proved to be the most common class of bugs in Pyro models, especially when using Pyro's discrete variable elimination algorithms for inference.

\subsection{Term rewriting} \label{sec:rules.rewriting}

Rewrite rules are specified by registering a pattern together with a handler to execute when the pattern is matched.
Figs.~\ref{fig:rewrite-rules-lazy}-\ref{fig:rewrite-rules-approx} present a subset of the rewriting rules defining operational semantics in different interpretations.
Fig.~\ref{fig:example-rules} provides two example rules.
Some properties of rules are too complex for the simple pattern matching mechanism, e.g. the conditions on the right of Fig.~\ref{fig:rewrite-rules-lazy}.
We support this finer grained matching by allowing extra matching logic in the handler for each pattern, whereby the handler can reject a pattern match.

\begin{figure}[h!]
\begin{lstlisting}[language=Python]
@eager.register(Binary, Op, Tensor, Tensor)
def eager_binary_tensor(op, lhs, rhs):
    assert lhs.dtype == rhs.dtype
    inputs, (x, y) = align_tensors(lhs, rhs)
    data = op(x, y)
    return Tensor(data, inputs, lhs.dtype)

@eager.register(Binary, AddOp, Delta, Funsor)
def eager_add_delta(op, lhs, rhs):
    if lhs.name in rhs.inputs:
        rhs = rhs(**{lhs.name: lhs.point})
        return op(lhs, rhs)
    return None  # defer to default implementation
\end{lstlisting}
\caption{
  Example rewriting rules for \lstinline$Binary$ addition, a special case of lifted function $\widehat f(\text-,\text-)$.
  The \lstinline$eager_binary_tensor$ rule performs a tensor operation.
  The \lstinline$eager_add_delta$ rule performs extra matching logic.
} \label{fig:example-rules}
\end{figure}

\begin{figure*}[p!]
  \begin{align*}
    %%%%%%%%%%%%%%%%%%%%%%%%%%%%%%%%%%%%%%%%
    \text{\parbox{12em}{\fbox{\textsc{Lazy}}}} & \\
    \op{Delta}(v_1, e_1)[v_2 \subs e_2]
      & \redto \op{Delta}(v_1, e_1[v_2 \subs e_2])
       && \text{if $v_1 \ne v_2$ and $v_1 \notin \op{fv}(e_2)$} \\
    \\[-1em]
    \op{Variable}((v\oftype\tau))[v \subs e] & \redto e \\
    \op{Variable}((v\oftype\tau))[v' \subs e] & \redto \op{Variable}((v\oftype\tau)) && \text{if $v \neq v'$}\\
    \\[-0.5em]
    e_1[v\subs e_2] & \redto e_1 && v\notin \op{fv}(e_1) \\
    \widehat f\left(e_1, \dots, e_n\right)[v \subs e_0]
      & \redto \widehat f\left(e_1[v \subs e_0], \dots, e_n[v \subs e_0]\right) \\
    \bigl(\Sum_{v_1} e_1\bigr)[v_2 \subs e_2]
      & \redto \Sum_{v_1} e_1[v_2 \subs e_2] 
       && v_1 \notin \op{fv}(e_2) \text{ and } v_1,v_2 \text{ distinct}\\
    \bigl(\Prod_{v_1/s} e_1\bigr)[v_2 \subs e_2]
      & \redto \Prod_{v_1/s} e_1[v_2 \subs e_2] 
       && v_1 \notin \op{fv}(e_2) \text{ and } v_1,s,v_2 \text{ all distinct}\\
  \end{align*}
  \caption {
    Selected rewrite rules for the \textsc{Lazy} interpretation.
    This interpretation only handles substitution into non-atomic terms,
    and unlike \textsc{Exact} none of its rules trigger numerical computation.
    Terms that do not match any of the rules are reflected, i.e. evaluated to lazy versions of themslves.
    Additional rules include normalization rules with respect to associativity, commutativity, and distributivity.
    Bound variables are $\alpha$-renamed to avoid conflict during substitution.
  }
  \label{fig:rewrite-rules-lazy}
\end{figure*}

\begin{figure*}[p!]
  \begin{align*}
    %%%%%%%%%%%%%%%%%%%%%%%%%%%%%%%%%%%%%%%%
    \text{\parbox{12em}{\fbox{\textsc{Exact}}}} & \\
    \op{Delta}(v, e_1) \times e_2
      & \redto \op{Delta}(v, e_1) \times e_2[v \subs e_1]
       && \text{if $v \in \op{fv}(e_2)$} \\
    \Sum_v \op{Delta}(v, e) & \redto 1 \\
    \Sum_v \op{Delta}(v, e_1) \times e_2 
      & \redto e_2 
        && \text{if $v \notin \op{fv}(e_2)$} \\
    \\[-1em]
    \op{Variable}( (v\oftype \mathbb Z_n) )
      & \redto \op{Tensor}( ( v\oftype \mathbb Z_n ), \texttt{arange}(n) ) \\
    \\[-1em]
    % tensor
    \op{Tensor}(\Gamma_1, w_1)[v \subs \op{Tensor}(\Gamma_2, w_2)] 
      & \redto \op{Tensor}( ( \Gamma_1 - (v\oftype\tau) ) \cup \Gamma_2, \texttt{index}(w_1, v, w_2) ) 
      && \text{if $(v\oftype\tau) \in \Gamma_1$} \\
    \widehat f\left(\op{Tensor}(\Gamma_1, w_1), \dots, \op{Tensor}(\Gamma_n, w_n)\right)
      & \redto \op{Tensor}(\cup_k \Gamma_k, f\left(w_1, \dots, w_n)\right) \\
    \Sum_v \op{Tensor(\Gamma, w)}
      & \redto \op{Tensor(\Gamma - (v\oftype\tau), \texttt{sum}(w, v))} 
      && \text{if $(v\oftype\tau) \in \Gamma$} \\
    \Prod_v \op{Tensor(\Gamma, w)}
      & \redto \op{Tensor(\Gamma - (v\oftype\tau), \texttt{prod}(w, v))} 
      && \text{if $(v\oftype\tau) \in \Gamma$} \\
    %\Prod_{v/c} \op{Tensor} & \redto \op{Tensor} \\
    \\[-1em]
    % gaussian substitution 1
    \op{Gaussian}(\Gamma_1, i, \precision)[v \subs \op{Tensor}(\Gamma_2, w)]
      & \redto \op{Gaussian}( (\Gamma_1 - (v\oftype\tau)) \cup \Gamma_2, i^\prime, \precision^\prime) 
      && \text{if a $v_2 \neq v \in \Gamma_1$ is real-valued} \\
    % gaussian substitution 2
    \op{Gaussian}(\Gamma_1, i, \precision)[v \subs \op{Tensor}(\Gamma_2, w)]
    & \redto \op{Tensor}( (\Gamma_1 - (v\oftype\tau)) \cup \Gamma_2, w^\prime) 
      && \text{if only $v \in \Gamma_1$ is real-valued} \\
    % gaussian fusion
    \op{Gaussian}(\Gamma_1, i_1, \precision_1) \times \op{Gaussian}(\Gamma_2, i_2, \precision_2)
      & \redto \op{Gaussian}(\Gamma_1 \cup \Gamma_2, i_1 + i_2, \precision_1 + \precision_2) \\
    % plated gaussian fusion
    \Prod_{v} \op{Gaussian}(\Gamma, i, \precision)
    & \redto \op{Gaussian}(\Gamma - (v\oftype\mathbb Z_n), \texttt{sum}(i, v), \texttt{sum}(\precision, v) )
       && \text{if $v$ is a bounded integer variable} \\
    % gaussian marginalization
    \Sum_v \op{Gaussian}(\Gamma, i, \precision) 
      & \redto \op{Tensor}(\Gamma_d, w) \times \op{Gaussian}(\Gamma - (v\oftype\tau), i^\prime, \precision^\prime) 
        && \text{if $v \in \Gamma$ is a real array variable} \\
    % batched gaussian mixture
      \Sum_v \op{Tensor}(\Gamma_1, w) \times \op{Gaussian}(\Gamma_2, i, \precision)
      & \redto \op{Tensor}(\Gamma_1, w) \times \Sum_v \op{Gaussian}(\Gamma_2, i, \precision)
        && \text{if $v \in \Gamma_2$ is a real array variable} \\
    % \Prod_{v/c} \op{Gaussian}
    %   & \redto \op{Tensor} \times \op{Gaussian}
    %    && \text{if $v$ is a bounded integer variable} \\
    \\
  \end{align*}
  \caption {
    Selected rewrite rules for the \textsc{Exact} interpretation.
    Terms that do not match any of these rules are evaluated with \textsc{Lazy}.
    Some rules trigger a numerical computation, as described in the main text and indicated with \texttt{monospace font}.
    The Gaussian rules for substitution and marginalization perform standard multivariate Gaussian computations that can be found in most graduate statistics or machine learning textbook, and as such we do not specify them in detail in this paper.
    Additional rules assist with pattern matching, including normalization rules with respect to associativity, commutativity, and distributivity.
    Bound variables are $\alpha$-renamed to avoid conflict during substitution.
  }
  \label{fig:rewrite-rules-exact}
\end{figure*}

% \begin{figure*}
%   \begin{align*}
%     %%%%%%%%%%%%%%%%%%%%%%%%%%%%%%%%%%%%%%%%
%     \text{\parbox{12em}{\fbox{\textsc{Exact}}}} & \\
%     \Prod_{v/c} e
%       & \redto \textsc{MarkovProduct}(e, v, c)
%         && \text{if $v$ is a bounded integer variable} \\
%     \\
%   \end{align*}
%   \caption {
%     Selected rewrite rules for the \textsc{Exact} interpretation of the new
%     ``Markov product'' term $\Prod_{v/c}$ described in Sec.~\ref{sec:parallel-scan} of the main text and Sec.~\ref{sec:markov-product} of the appendix.
%     \textsc{MarkovProduct} means invoking Algorithm \ref{alg:markov-product},
%     all of whose steps are also interpreted under the \textsc{Exact} interpretation.
%     Terms that do not match any of these rules are evaluated with \textsc{Lazy}.
%     Bound variables are $\alpha$-renamed to avoid conflict during substitution.
%   }
%   \label{fig:rewrite-rules-exact-markov}
% \end{figure*}

\begin{figure*}[p!]
  \begin{align*}
    %%%%%%%%%%%%%%%%%%%%%%%%%%%%%%%%%%%%%%%%
    \text{\parbox{12em}{\fbox{\textsc{Exact}}}} & \\
    \Prod_{v/c} e
      & \redto \textsc{MarkovProduct}(e, v, c)
        && \text{if $v$ is a bounded integer variable} \\
        & && \text{and $c \subseteq (\op{fv}(e) - v) \times (\op{fv}(e) - v)$ } \\
    \\
    %%%%%%%%%%%%%%%%%%%%%%%%%%%%%%%%%%%%%%%%
    \text{\parbox{12em}{\fbox{\textsc{Optimize}}}} & \\
    % \Sum_v e & \redto e && \text{if $v \notin \op{fv}(e)$} \\
    \Sum_V e & \redto e && \text{if $V \subseteq \op{fv}(e)$ is empty} \\
    \\
    % unary contractions
    \Sum_V e_1 \times \cdots \times e_n
      & \redto \Sum_{V'} (\Sum_{V_1 \cap V} e_1) \times \cdots \times (\Sum_{V_n \cap V} e_n )
      && \text{where $V_k = \op{fv}(e_k) - \mathop{\cup}\limits_{j \neq k} \op{fv}(e_j)$ } \\
      & && \text{and where $V' = V - \mathop{\cup}\limits_{k} V_k $ } \\
      & && \text{if $V \subseteq \op{fv}(e_1 \times \cdots \times e_n)$ } \\
      & && \text{and if any $v \in V$ appear in only one $e_k$} \\
    \\
    % % pairwise contractions
    % \Sum_V e_1 \times \cdots \times e_n
    %   & \redto \Sum_{V - V'} (\Sum_{V' \cap V} e_1 \times e_2 ) \times e_3 \times \cdots \times e_n 
    %   && \text{where $V' = \op{fv}(e_1 \times e_2) - \mathop{\cup}\limits_{k > 2} \op{fv}(e_k)$ } \\
    %   & && \text{if $V \subseteq \op{fv}(e_1 \times \cdots \times e_n)$ } \\
    %   & && \text{and if all $v \in V$ appear in $\geq 2$ $e_k$} \\
    % pairwise contractions 2
    \Sum_V e_1 \times \cdots \times e_n
      & \redto \Sum_{V - V'} (\Sum_{V' \cap V} e_{\sigma_1} \times e_{\sigma_2} ) \times e_{\sigma_3} \times \cdots \times e_{\sigma_n} 
      && \text{where $\sigma_1, \ldots, \sigma_n$ is a min-cost path} \\
      & && \text{where $V' = \op{fv}(e_{\sigma_1} \times e_{\sigma_2}) - \mathop{\cup}\limits_{k > 2} \op{fv}(e_{\sigma_k})$ } \\
      & && \text{if $V \subseteq \op{fv}(e_1 \times \cdots \times e_n)$ } \\
      & && \text{and if all $v \in V$ appear in $\geq 2$ $e_k$} \\
  \end{align*}
  \caption {
    Selected rewrite rules for the variable elimination algorithms of Sec.~\ref{sec:algorithms}.
    \textsc{MarkovProduct} means invoking Algorithm \ref{alg:markov-product},
    described in Sec.~\ref{sec:parallel-scan} of the main text and Sec.~\ref{sec:markov-product} of the appendix,
    all of whose steps are also evaluated with the \textsc{Exact} interpretation.
    Terms that do not match rules in the \textsc{Optimize} interpretation are evaluated with the \textsc{Lazy} interpretation.
    The permutation $\sigma_1, \ldots, \sigma_n$ in the final rule of \textsc{Optimize} has the following form: 
    $\sigma_1, \sigma_2$ are selected via $\argmin_{k,k'} \text{cost}(\op{fv}(e_k \times e_{k'}) - \mathop{\cup}\limits_{j \neq k,k'} \op{fv}(e_j), e_k, e_{k'})$ where $\text{cost}(V', e_k, e_{k'})$ is a heuristic cost function, such as memory usage, representing the computational cost of a binary sum-product operation,
    and the remaining $\sigma_3, \ldots, \sigma_n$ follow the original term ordering.
    This overall approach to variable elimination or tensor contraction is not novel.
    Indeed, our implementation directly uses the tensor contraction library \texttt{opt\_einsum} \citep{smith2018opt_einsum} to compute the $\sigma_k$s; for this reason we defer to their paper and documentation for detailed formal explanations.
  }
  \label{fig:rewrite-rules-ve}
\end{figure*}

\begin{figure*}[p!]
  \begin{align*}
    %%%%%%%%%%%%%%%%%%%%%%%%%%%%%%%%%%%%%%%%
    \text{\parbox{12em}{\fbox{\textsc{MomentMatching}}}} & \\
    % alignment attempt 1
    \Sum_v \op{Tensor}(\Gamma_1, w) \times \op{Gaussian}(\Gamma_2, i, \precision)
      & \redto \op{Gaussian}(\Gamma_2 - (v\oftype\tau), i^\prime, \precision^\prime) \times \Sum_v \op{Tensor}(\Gamma_1, w^\prime)
       && \text{if $v$ is a bounded integer variable} \\
    % alignment attempt 2
    % & \Sum_v \op{Tensor}(\Gamma_1, w) \times \op{Gaussian}(\Gamma_2, i, \precision) \redto \\
    %  & \op{Gaussian}(\Gamma_2 - (v\oftype\tau), i^\prime, \precision^\prime) \times \Sum_v \op{Tensor}(\Gamma_1, w^\prime)
    %     & && \text{if $v$ is a bounded integer variable} \\
    \\
  \end{align*}

  \begin{align*}
    %%%%%%%%%%%%%%%%%%%%%%%%%%%%%%%%%%%%%%%%
    \text{\parbox{12em}{\fbox{\textsc{MonteCarlo}}}} & \\
    \Sum_v \op{Tensor}(\Gamma, w) \times e
      & \redto \op{Tensor}(\Gamma - (v\oftype\tau), w_N \times w_D) \times \Sum_v \op{Delta(v, e_s)} \times e 
      && \text{if $v$ is a bounded integer variable} \\
      \Sum_v \op{Gaussian}(\Gamma_d \cup (v\oftype\tau), i, \precision) \times e
      & \redto \op{Tensor}(\Gamma_d, w_N) \times \Sum_v \op{Delta(v, e_s)} \times e
      && \text{if $v$ is a real array variable} \\
  \end{align*}
  \caption {
      Selected rewrite rules for the approximate \textsc{MonteCarlo} and \textsc{MomentMatching} interpretations of Sec.~\ref{sec:algorithms}.
    Terms that do not match any rules in the \textsc{MonteCarlo} and \textsc{MomentMatching} interpretations are evaluated with the \textsc{Exact} interpretation.
    In the moment matching rules, the new values $i^\prime,\precision^\prime,w^\prime$ are computed to match the moments of the original expression as discussed in Sec.~\ref{sec:moment-matching} of the main text.
    In the \textsc{MonteCarlo} rules, $e_s$ refers to the sample Tensor and $w_N$ and $w_D$ refer to normalizer and Dice factors discussed in Sec.~\ref{app:monte-carlo} of the appendix.
  }
  \label{fig:rewrite-rules-approx}
\end{figure*}

Code in user-facing models often contains a mixture of raw tensors and funsors.
We use operator overloading and multiple dispatch to handle mixtures of raw tensors and funsors.
Further, we abstract operators into classes such as Unary, Binary, and Associative (a subclass of Binary), allowing us to write individual rules that can each handle large classes of operations.

Inference code often requires patterns too deep for the minimal syntax described in Section~\ref{sec:syntax}.
Thus our implementation adds syntax for a number of compound funsors.
For example the slicing operations in Algorithm~\ref{sec:parallel-scan}
$$
f_e \from f[s_e,\, t\subs(0,2,4,6,..., 2\lfloor T/2 \rfloor-2)]
$$
are represented as symbolic Slice funsors that are equivalent to Tensor funsors but with an extra rule allowing zero-copy substitution.
Similarly concatenation in that algorithm is represented by a Cat funsor.

% Practically we implement this binary sum-product expression as a fused operator, e.g.~a Lapack GEMM kernel if $w$ is discrete or a Kalman filter update if $w$ is a real tensor.

\subsubsection{Affine pattern matching}

Both Tensor funsors and Gaussian funsors are closed under a large class of substitutions.
Tensor funsors are closed under substitution of Tensors, thereby permitting batched computation in the presence of delayed discrete random variables.
Gaussians are closed under substitution of \emph{affine funsors} of real-tensor valued variables.

We recognize affine funsors using a two-step process:
first we specify a sound but incomplete algorithm to decide whether a pattern is affine;
and then we determine affine coefficients by substituting $1+n$ different grounding substitutions (or ``probes'') into the matched funsor: one probe to determine the constant offset, and one batched probe for each of the $n$ real-tensor valued free variables in the matched funsor.
The resulting affine funsor is represented in a canonical form using \texttt{einsum}, similar to the approach of \citep{hoffman2018autoconj}.

\subsection{Monte Carlo gradient estimation} \label{app:monte-carlo}

Stochastic gradient estimation is a fundamental computation in black-box variational inference \citep{kingma2013auto,rezende2014stochastic,ranganath2014black}, aiming to produce unbiased estimates of the gradient of a loss function w.r.t.~parameters, in the presence of integrals and sums approximated by Monte Carlo sampling.
Two broad approaches include: \emph{i}) constructing a surrogate loss function
(a secondary compute graph)
whose expected gradient matches the gradient of the expected loss \citep{schulman2015gradient}; and \emph{ii}) multiplying each stochastic choice by a differentiable ``DiCE'' factor to ensure the original compute graph is differentiable \citep{foerster2018dice}.

In our approach to stochastic gradient estimation, an approximate \textsc{MonteCarlo} interpretation stochastically rewrites one funsor (a deterministic but possibly non-analytic compute graph) to a more tractable funsor.
The key rewrite rules of the \textsc{MonteCarlo} interpretation are shown in Fig.~\ref{fig:rewrite-rules-approx}.
This allows rewriting to evaluate analytic integrals and drop zero-expectation terms before sampling.
The \textsc{MonteCarlo} interpretation rewrites Tensor and Gaussian funsors to Tensor-weighted Delta funsors that match in expectation at all derivatives.
Continuous samples are reparameterized (and hence differentiable) and weighted by a normalizer Tensor.
Discrete samples are non-differentiable, and are weighted by a normalizer and a differentiable DiCE factor.

\subsection{Atomic term data handling}

Tensor, Gaussian and Delta terms have underlying numerical data attached to them
in the form of one or more PyTorch tensors.
Our semantics and implementation rely heavily on efficient broadcasting
as implemented in PyTorch and other tensor libraries,
but because free variables have names and type contexts are unordered,
atomic terms and operations that interact with their data are responsible for maintaining an alignment between the data tensor shape and the type contexts.

In our implementation, this is accomplished by having atomic terms maintain an ordering in their type contexts (which are typically referred to as ``inputs'' in our code).
This ordering is an implementation detail and does not affect our semantics; we use it to associate individual free variables with individual dimensions of the data tensor(s).
To make this more concrete, we include the actual code for the eager binary product of two Tensor funsors from our implementation in \texttt{eager\_binary\_tensor} in Fig.~\ref{fig:example-rules}.
The \texttt{align\_tensors} function uses the ordered type contexts to permute and reshape the underlying data tensors \texttt{x = lhs.data} and \texttt{y = rhs.data} so that the resulting data tensor computed by \texttt{op(x, y)} using PyTorch's broadcasting behavior can be wrapped with a Tensor funsor with the correct type context.

\section{Details of example programs}
\label{sec:appexamples}

\subsection{Gaussian mixture model} \label{sec:gmm-code-app}

Fig.~\ref{ex:syntax} of the main text provides a complete example of a probabilistic computation in our language as described in the paper.
To illustrate the close correspondence between our paper and our actual implementation, we include in Fig.~\ref{ex:syntax-runnable} of the appendix a Python version of this example.
%This script may also be found in the \texttt{examples/} directory of our source code.

\begin{figure*}[ht!]
\begin{lstlisting}[language=Python, frame=tb]
from collections import OrderedDict

import torch

import funsor
from funsor.domains import bint, reals
from funsor.gaussian import Gaussian
from funsor.ops import add, logaddexp
from funsor.torch import Tensor
from funsor.terms import Variable

def gmm(x, i_z, prec_z, i_x, prec_x):
    x = Tensor(x, OrderedDict(), 'real')
    z = Variable("z", reals(2, 3))
    c = Variable("c", bint(2))
    j = Variable("j", bint(50))
    logp_c = Tensor(torch.tensor([0.5, 0.5]).log(), OrderedDict(c=bint(2)))
    logp_z = Gaussian(i_z, prec_z, OrderedDict(z=reals(3)))
    logp_xc = Gaussian(i_x, prec_x, OrderedDict(c=bint(2), z=reals(3), x=reals(3)))

    logp_x = logp_c + logp_xc(z=z[c], x=x[j])
    logp_x = logp_x.reduce(logaddexp, "c").reduce(add, "j")
    logp_x = logp_x + logp_z(z=z[c]).reduce(add, "c")
    logp_x = logp_x.reduce(logaddexp, "z")
    return logp_x
\end{lstlisting}
\caption{Funsor code for computation of the Gaussian mixture model likelihood in Fig.~\ref{ex:syntax}. The code is very similar to Fig.~\ref{ex:syntax}, except that we work with log-probabilities rather than probabilities for numerical stability and represent the plated product and marginalization terms with a single \texttt{reduce} function parametrized by an operation.}
  \label{ex:syntax-runnable}
\end{figure*}

\subsection{Maximum marginal likelihood} \label{sec:map-code-app}

Fig.~\ref{fig:map-code} in Sec.~\ref{sec:ppl} of the main text illustrates a funsor computation for maximum marginal likelihood inference in a simple generative model.
Inference steps on the right are triggered by execution of each line of model code on the left.
On line 1 the joint distribution is initialized to the trivial normalized distribution.
On line 2 a delayed sample statement triggers creation of a Variable funsor $z$ in the model code and accumulation of an unevaluated factor distribution $p_z=\op{Gaussian}((v\oftype\mathbb R),i_z,\precision_z)[v\subs z]$ in inference code.
(We assume by convention distributions like $p_z$, $p_{x|z}$, $q_{z|x}$ name their variate $v$ and parameter, if any, $\theta$.)
On line 3 a nonlinear function is lazily applied to $z$ creating a lazy funsor expression $y=\widehat\exp(z)$.
On line 4 a distribution is conditioned on ground data $x$, triggering accumulation of a factor $p_{x|z}=\op{Gaussian}((v\oftype\mathbb R, \theta\oftype\mathbb R),i_x,\precision_x)[\theta \subs y, v\subs x]$ with free variable $z$ (because $x$ is ground and $y$ has free variable $z$).
Model termination on line 5 triggers marginalization of the $z$ variable, which can be performed either exactly by pattern matching or approximately by Monte Carlo sampling.
The resulting objective is differentiable with respect to any parameters.
Optimization is achieved by repeatedly executing model code, accumulating factors, and marginalizing over $z$, using PyTorch autograd to compute gradients of the objective with respect to the model parameters, and updating the parameter values with a stochastic gradient-based optimizer.

\subsection{Variational inference} \label{sec:elbo-code-app}

Fig.~\ref{fig:elbo-code} in Sec.~\ref{sec:ppl} of the main text illustrates a typical funsor computation for variational inference, where a data-dependent variational distribution $q_{z|x}=\op{Gaussian}((v\oftype\mathbb R, \theta\oftype\mathbb R),i_q, \precision_q)[v \subs \op{Variable}(z\oftype\mathbb R), \theta \subs x]$ is fit to data.
Lines 1--6 execute delayed sample statements in the model code, and accumulate distributions $p$ and $q$ with a single free variable $z$ in inference code.
Line 7 combines $p$ and $q$ to compute the ELBO, which can be performed either exactly by pattern matching or approximately by Monte Carlo sampling $z$ from $q$.
As in the previous example, optimization is achieved by repeatedly executing model code, accumulating factors, integrating over $z$, using PyTorch autograd to compute gradients of the objective with respect to the model parameters, and updating the parameter values with a stochastic gradient-based optimizer.

This example is not terribly interesting as written, due to space constraints in the main text.
Readers looking for a more complete variational inference example may wish to examine the details of the neural variational Kalman filter described in Sec.~\ref{sec:bart} of the main text and Sec.\ref{sec:appbart} of the appendix.

\section{Experimental details}
\label{sec:appexp}

%%%%%%%%%%%%%%%%%%%%%%%%%%%%%%%%%%%%%%%%%%%%%%%%%%%%%%%%%%%%%%%%%%%%%%%%%%%%%%
\subsection{Discrete Factor Graphs}
\label{sec:appecohmm}

In this section we describe in more detail the experiments of Sec.~\ref{sec:ecohmm} of the main text. Code for these experiments may be found in the \texttt{examples/mixed\_hmm/} directory of our source code.

\subsubsection{Dataset details}
As in \cite{obermeyer2019tensor},
we downloaded the data from \texttt{momentuHMM} \citep{mcclintock2018momentuhmm}, an R package for analyzing animal movement data with generalized hidden Markov models. 
The raw datapoints are in the form of irregularly sampled time series (datapoints separated by 5-15 minutes on average) of GPS coordinates and diving activity
for each individual in the colony (10 males and 7 females) over the course of a single day
recorded by lightweight tracking devices physically attached to each animal by researchers.
We used the \texttt{momentuHMM} harbour seal example\footnote{\url{https://github.com/bmcclintock/momentuHMM/blob/master/vignettes/harbourSealExample.R}} preprocessing code
(whose functionality is described in detail in section 3.7 of \citep{mcclintock2018momentuhmm})
to independently convert the raw data for each individual into smoothed, temporally regular time series of step sizes, turn angles, and diving activity, saving the results and using them for our population-level analysis.

\subsubsection{Model details}

\begin{figure}[ht!]
\begin{center}
\resizebox {.7\columnwidth} {!} {
\begin{tikzpicture}
\definecolor{mygrey}{RGB}{220,220,220}

% states and observations
\node[latent, fill=mygrey](Y1) {$y_t$};
\node[latent, fill=mygrey, right=of Y1](Y2) {$y_{t+1}$};
\node[latent, above=of Y1](Z1) {$x_{t}$};
\node[latent, above=of Y2, right=of Z1](Z2) {$x_{t+1}$};
\node[const, left=of Z1](Z0) {$\cdots$};
\node[const, right=of Z2](Z3) {$\cdots$};

\edge {Z0} {Z1}
\edge {Z1} {Z2,Y1}
\edge {Z2} {Y2,Z3}

% individual random effects
\node[latent, above=of Z0](EI) {$\epsilon_I$};
\node[const, left=of EI](TI) {$\theta_I$};
\node[const, below=0.5cm of TI](PI) {$\pi_I$};

\edge {EI} {Z0,Z1,Z2,Z3};
\edge {TI,PI} {EI}

% group random effects
\node[latent, left=of EI, above=of EI, xshift=0.5cm](EG) {$\epsilon_G$};
\node[const, above=of EG](TG) {$\theta_G$};
\node[const, right=0.5cm of TG](PG) {$\pi_G$};

\edge {EG} {Z0,Z1,Z2,Z3};
\edge {TG,PG} {EG}

% % fixed effects
% \node[const, below=2cm of TI](XG) {$X_G$};
% \node[const, right=of EI](XI) {$X_I$};
%
% \edge {XG,XI} {Z1,Z2};

% individual plate
\plate [inner xsep=0.3cm, inner ysep=0.2cm, yshift=0.1cm] {a}{(Z0)(Z3)(Z1)(Y1)(Z2)(Y2)(EI)}{}; %(XI)}{};
\node [left plate caption=a-wrap, xshift=0.5em, yshift=0.5em]{$|I|$};

% group plate
\plate [inner sep=0.4cm, yshift=0.1cm, xshift=-0.1cm] {b}{(Z0)(Z3)(Z1)(Y1)(Z2)(Y2)(EG)(TI)}{}; %(XI)(XG)}{};
\node [left plate caption=b-wrap] {$|G|$};
\end{tikzpicture}
}
\end{center}
\caption{
A single state transition in the hierarchical mixed-effect hidden Markov model used in our experiments in Section \ref{sec:ecohmm}. $\theta$s and $\pi$s are learnable parameters.
}
\label{fig:mixed-hmm-model}
\end{figure}

Our models are discrete hidden Markov models whose state transition distribution is specified by a hierarchical generalized linear mixed model.
At each timestep $t$, for each individual trajectory $b \in I$ in each group $a \in G$, we have
\begin{multline*}
\text{logit}(p(x^{(t)}_{ab} = \text{state } i \mid x^{(t-1)}_{ab} = \text{state } j)) = \\
% \big( \epsilon_{G,a} + \epsilon_{I,ab} + Z^\intercal_{I,ab} \beta_{1} + Z^\intercal_{G,a} \beta_{2} + Z^\intercal_{T,abt} \beta_{3} \big)_{ij}
\big( \epsilon^\intercal_{I,ab} \theta_{1} + \epsilon^\intercal_{G,a} \theta_{2} \big)_{ij}
\end{multline*}
where $a,b$ correspond to plate indices, $\epsilon$s are independent discrete random variables, and $\theta$s are parameter vectors.
See Fig.~\ref{fig:mixed-hmm-model} for the corresponding plate diagram.

The values of the independent random variable $\epsilon_I$ and $\epsilon_G$ are each sampled from a set of three possible values shared across the individual and group plates, respectively.
That is, for each individual trajectory $b \in I$ in each group $a \in G$, we sample single random effect values for an entire trajectory:
\begin{align*}
\epsilon_{G,a} &\sim \text{Categorical}(\pi_G) \\
\epsilon_{I,ab} &\sim \text{Categorical}(\pi_{I,a}) \\
\end{align*}

Observations $y^{(t)}$ are represented as sequences of real-valued step lengths, modelled by a zero-inflated Gamma distribution, 
turn angles, modelled by a von Mises distribution,
and intensity of diving activity between successive locations, modelled with a zero-inflated Beta distribution following \citep{mcclintock2018momentuhmm,obermeyer2019tensor}.
Each likelihood component has a global learnable parameter for each possible value of $x$.
We grouped animals by sex and implemented versions of this model with no random effects, 
and with random effects present at the group, individual, or both group and individual levels.

\subsubsection{Training and implementation details}

We performed batch gradient descent with the Adam optimizer with initial learning rate $0.1$ and default momentum hyperparameters \citep{kingma2014adam}.
We annealed the learning rate once by a factor of $0.1$ when the training loss stopped decreasing,
ultimately training the models for $2000$ epochs with $10$ restarts from random initializations.
The number of random effect parameter values was taken from \citep{mcclintock2018momentuhmm}.

Our models are implemented in Python on top of PyTorch, which we use for automatic differentiation.
To compute the wall clock times in Table \ref{tbl:ecohmm}, 
we evaluated the marginal likelihood of our models on the full preprocessed harbour seal dataset and used PyTorch's reverse-mode automatic differentiation to compute gradients with respect to all trainable parameters.
All experiments were performed on an Ubuntu 18.04 workstation with a 24-core Intel Xeon processor, 64GB of RAM, and two NVIDIA RTX 2080 GPUs.
In all cases, the majority of the time is spent in the PyTorch backward pass,
which is independent of any pure Python overhead in our impementation of the forward pass.

\subsection{Additional performance experiments}
\label{app:ecohmm-perf}

\begin{figure}[ht!]
\begin{center}
\resizebox {.7\columnwidth} {!} {
\begin{tikzpicture}
\definecolor{mygrey}{RGB}{220,220,220}

% states and observations
\node[latent, fill=mygrey](Y1) {$y_t$};
\node[latent, fill=mygrey, right=of Y1](Y2) {$y_{t+1}$};
\node[latent, above=of Y1](Z1) {$x_{t}$};
\node[latent, above=of Y2, right=of Z1](Z2) {$x_{t+1}$};
\node[const, left=of Z1](Z0) {$\cdots$};
\node[const, right=of Z2](Z3) {$\cdots$};

\edge {Z0} {Z1}
\edge {Z1} {Z2,Y1}
\edge {Z2} {Y2,Z3}

% individual random effects
\node[latent, above=of Z0](EI) {$\epsilon_I$};
\node[const, left=of EI](TI) {$\theta_I$};
\node[const, below=0.5cm of TI](PI) {$\pi_I$};

\edge {EI} {Z0,Z1,Z2,Z3};
\edge {TI,PI} {EI}

% individual plate
\plate [inner xsep=0.3cm, inner ysep=0.2cm, yshift=0.1cm] {a}{(Z0)(Z3)(Z1)(Y1)(Z2)(Y2)(EI)}{}; %(XI)}{};
\node [left plate caption=a-wrap, xshift=0.5em, yshift=0.5em]{$|I|$};

\end{tikzpicture}
}
\end{center}
\caption{
    A single state transition in the simplified hidden Markov model used to demonstrate the scalability of funsor variable elimination using the \textsc{MarkovProduct} operation described in Section \ref{sec:ve}.
}
\label{fig:mixed-hmm-model-perf}
\end{figure}

We systematically evaluate the parallel scaling of our algorithms using a simplified version (Figure \ref{fig:mixed-hmm-model-perf}) of the mixed-effect hidden Markov models used in Section \ref{sec:ecohmm}. 
These models have no group-level random effects and a single categorical likelihood per timestep.

For many combinations of plate size $I$ and chain length $T$,
we generate appropriately sized fake datasets and measure the time taken by tensor variable elimination \cite{obermeyer2019tensor} and funsor variable elimination described in \ref{sec:ve} to compute gradients of transition parameters $\theta, \pi$.
%posterior marginal distributions, and maximum a posteriori state sequences.
As in the previous section, all experiments were performed on an Ubuntu 18.04 workstation with a 24-core Intel Xeon processor, 64GB of RAM, and two NVIDIA RTX 2080 GPUs.

Figure \ref{fig:gradient-perf} shows
%Figures \ref{fig:gradient-perf}, \ref{fig:marginal-perf}, and \ref{fig:map-perf} show
the average times across 20 trials for each $(I, T)$ combination. % and semiring.
Our parallel algorithms scale nearly perfectly with chain length on a single GPU,
and typically achieve a wall-clock time speedup of 2 or more orders of magnitude over the sequential versions.

\begin{figure*}
  \begin{minipage}[b]{0.5\linewidth}
    \centering{
    \includegraphics[width=\columnwidth]{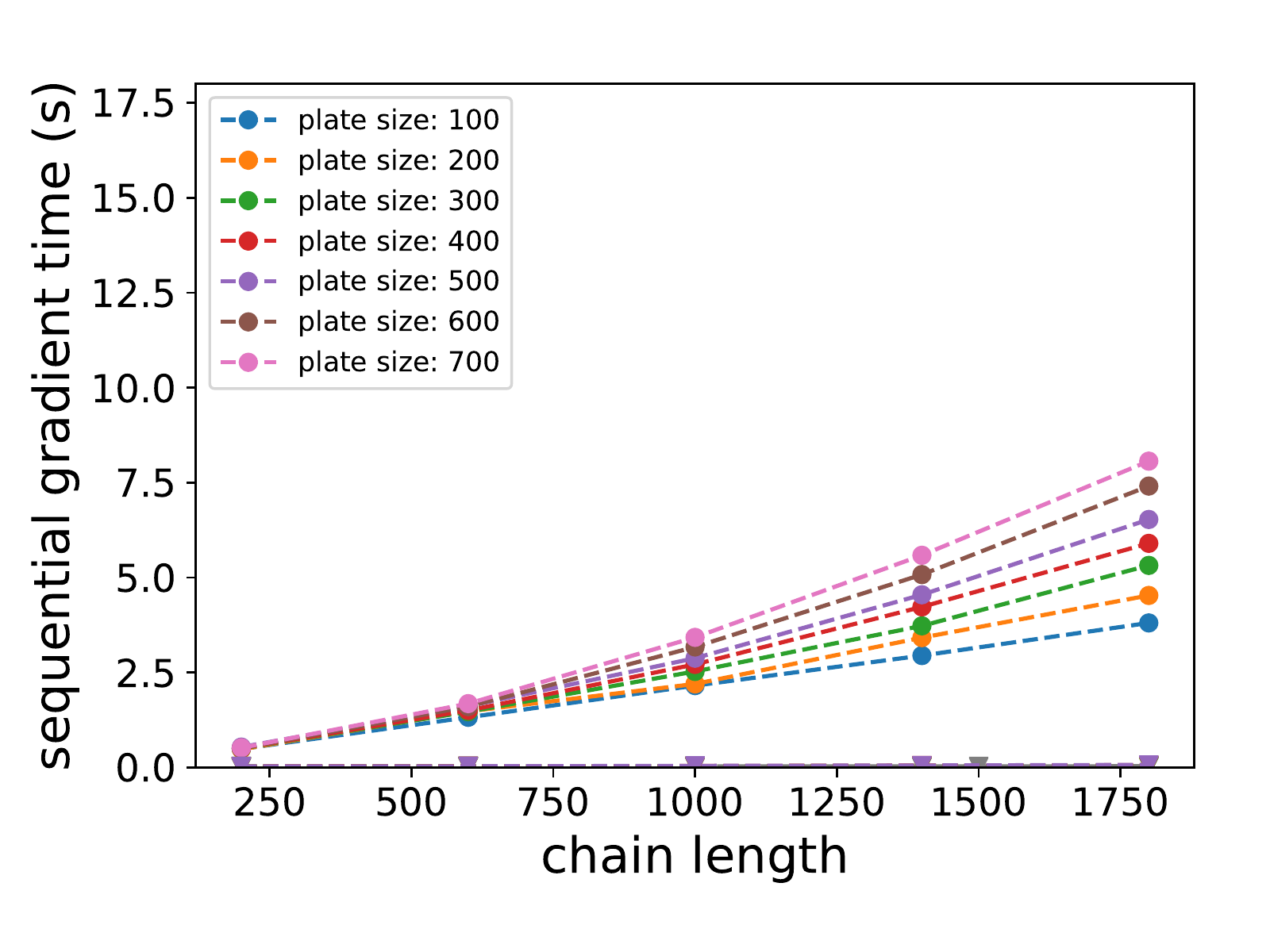}}
  \end{minipage}
  \begin{minipage}[b]{0.5\linewidth}
    \centering{
    \includegraphics[width=\columnwidth]{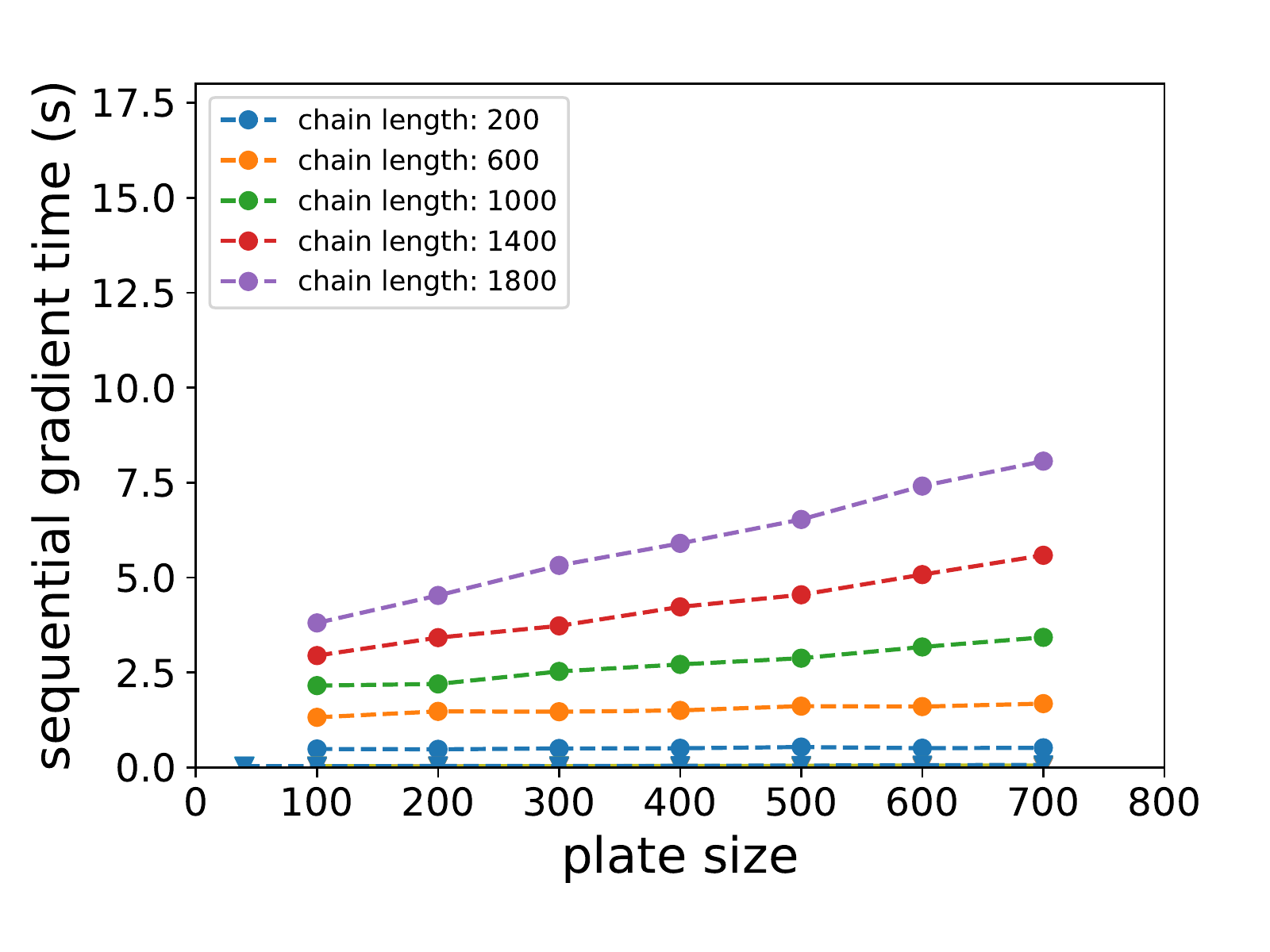}}
  \end{minipage}
  \hfill
  \begin{minipage}[b]{0.5\linewidth}
    \centering{
    \includegraphics[width=\columnwidth]{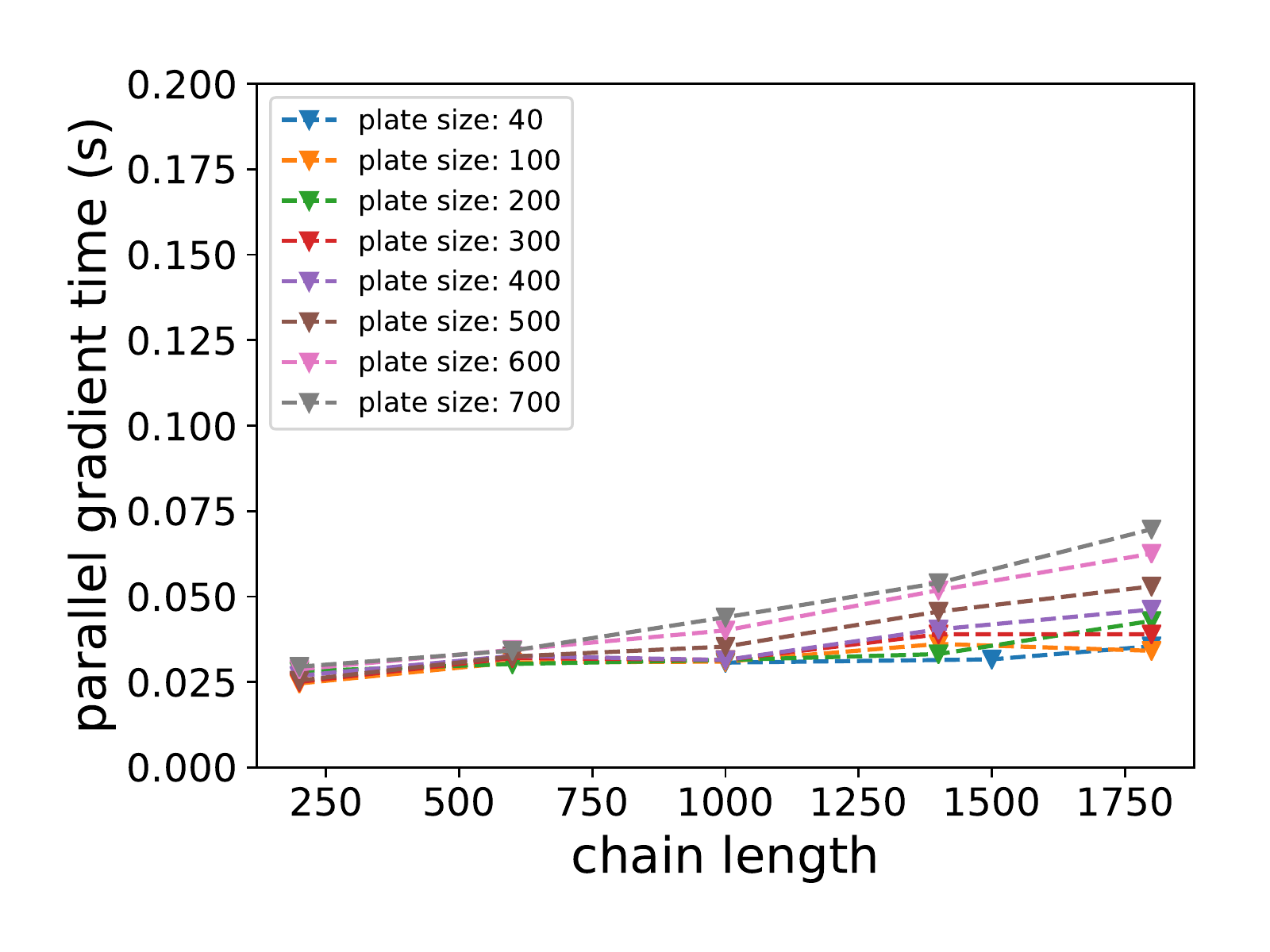}}
  \end{minipage}
  \begin{minipage}[b]{0.5\linewidth}
    \centering{
    \includegraphics[width=\columnwidth]{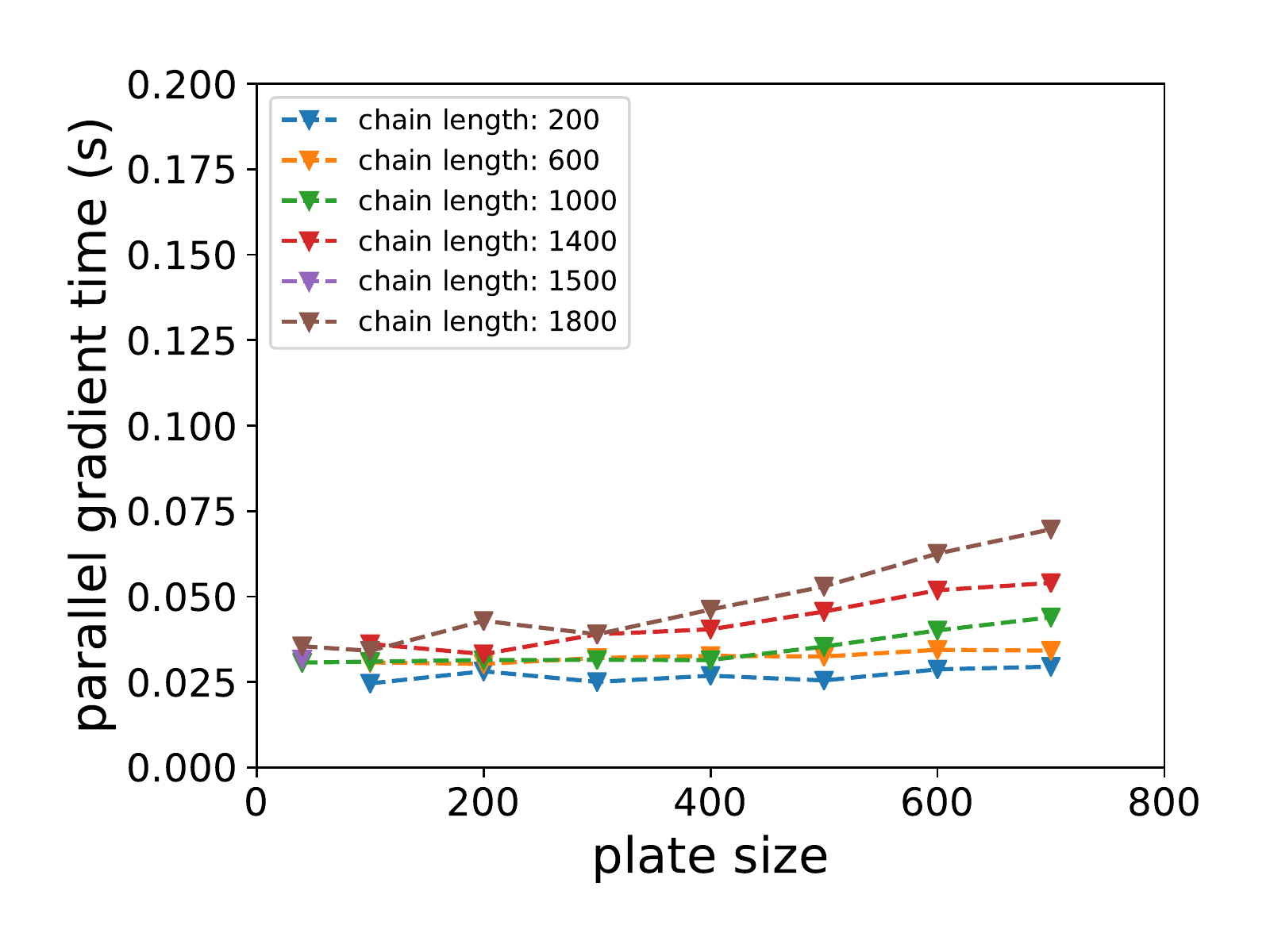}}
  \end{minipage}
  \caption{
      Visualizing average wall clock times required to compute marginal likelihoods using funsor variable elimination and gradients of the marginal likelihood for all transition parameters using PyTorch's reverse-mode automatic differentiation.
      \emph{Top left:} average computation time with sequential tensor variable elimination as a function of time series length $T$, for different fixed values of plate size $I$. The triangle-marked line at the bottom is the average time required for parallel computation, shown for scale here and in detail below.
      \emph{Bottom left:} average computation time with parallel funsor variable elimination as a function of time series length $T$, for different fixed values of plate size $I$.
      \emph{Top right:} average computation time with sequential tensor variable elimination as a function of plate size $I$, for different fixed values of time series length $T$. The triangle-marked line at the bottom is the average time required for parallel computation, shown for scale here and in detail below.
      \emph{Bottom right:} average computation time with parallel funsor variable elimination as a function of plate size $I$, for different fixed values of time series length $T$.
  }
  \label{fig:gradient-perf}
\end{figure*}

\subsection{Kalman filter with global latents}
\label{sec:appbias}

In this section we describe in more detail the experiment of Sec.~\ref{sec:bias} of the main text. Code for this experiment may be found in \texttt{examples/sensor.py} in our source code.

The underlying dynamics model used to generate the data is a 2-D linear Nearly Constant Velocity (NCV) model.
The state space model has both continuous transition states and continuous observations with additive Gaussian noise.
\begin{align*}
\nonumber
z_t = F_{t-1} z_{t-1} + q \\
x_t = H_{t-1} z_{t} + r + \beta
\end{align*}
where $F \in \mathbb{R}^{n_z \times n_z}$ is the state transition matrix, $H \in \mathbb{R}^{n_z \times n_x}$ is the state-to-observation matrix, $q, r$ are independent Gaussians
with covariances $Q, R$ respectively, and $\beta$ is the joint bias distribution, a zero mean Gaussian with learnable covariance $B$.
We can now write the conditional probabilities as follows:
\begin{align*}
    p(z_t | z_{t-1}) = N(F_{t} z_{t-1}, Q)\\
    p(x_t | z_{t}) = N(H_t z_t, R + B)
\end{align*}
To compute the marginal probability, we perform a sequential version of the sum product algorithm, collapsing out previous states after each measurement update.
We learn the joint bias scale, process noise, and observation noise using gradient descent with the Adam optimizer \citep{kingma2014adam} for 50 steps with a learning rate of 0.1, and beta parameters $(0.5, 0.8)$.

% \begin{table}[ht!]
%   \begin{center}
%   \resizebox {.98\columnwidth} {!} {
%     \begin{tabu}{|c|[1pt]c|c|c|c|}
%       \hline
%       \cellcolor[gray]{0.95}
%       & $t=50$ \cellcolor[gray]{0.95}
%       & $t=100$ \cellcolor[gray]{0.95}
%       & $t=150$ \cellcolor[gray]{0.95}
%       & $t=200$ \cellcolor[gray]{0.95}
%       \\ \tabucline[1pt]{-}
%       Include bias & $\bf{1.76 \times 10^3}$ & $6.33 \times 10^3$ & $1.45 \times 10^4$ & $3.29 \times 10^4$
%       \\ \hline
%       Exclude bias & $\bf{3.88 \times 10^3}$ & $2.02 \times 10^4$ & $4.97 \times 10^4$ & $1.26 \times 10^5$
%       \\ \hline
%     \end{tabu}
%   } %% endresize
%   \end{center}
%   \caption{
%       Log marginal likelihood numbers for varying track lengths $t$ and four sensors for 2 model variants: one accounting for bias and one ignoring bias.
%   }
%   \label{table:bias}
% \end{table}

\begin{figure}
\begin{lstlisting}[language=Python, frame=tb]
curr = "state_init"
log_p = init_dist(state=curr)
log_p += bias_dist
for time, obs in enumerate(track):
    # update previous and current states
    prev, curr = curr, f"state_{time}"
    # add the transition dynamics
    log_p += trans_dist(prev=prev, curr=curr)
    # add observation noise 
    log_p += observation_dist(state=curr, obs=obs)
    # collapse out the previous state
    log_p = log_p.reduce(ops.logaddexp, prev)
# marginalize out remaining latent variables
log_p = log_p.reduce(ops.logaddexp)
\end{lstlisting}
  \caption{
    Python code using the funsor library implementing the sum product algorithm to perform inference on the biased state space model model described in section ~\ref{sec:bias}.
    Note that strings are automatically coerced to Variable funsors on substitution, as in \lstinline$init_dist(state="state_init")$.
  }
  \label{fig:appbias-code}
\end{figure}

%%%%%%%%%%%%%%%%%%%%%%%%%%%%%%%%%%%%%%%%%%%%%%%%%%%%%%%%%%%%%%%%%%%%%%%%%%%%%%
\subsection{Switching Linear Dynamical System}
\label{sec:appslds}

In this section we describe in more detail the experiment of Sec.~\ref{sec:slds} of the main text. Code for this experiment may be found in \texttt{examples/eeg\_slds.py} in our source code.

The joint probability $p(y_{1:T}, s_{1:T}, x_{1:T})$ of model variant SLDS-I is given by
%%%
\begin{equation}
\nonumber
\prod_{t=1}^T  p(s_t | s_{t-1}) \NN(x_t | A^{s_t} x_{t-1}, \sigma_{\rm trans}^{s_t}) \NN(y_t | B x_t, \sigma_{\rm obs})
\end{equation}
%%%
where $A^{s_t}$ is a state-dependent transition matrix, $\sigma_{\rm trans}^{s_t}$ is a state-dependent
diagonal transition noise matrix, B is a state-independent observation matrix, and $\sigma_{\rm obs}$ is a state-independent
diagonal observation noise matrix. Similarly, the joint probability of variant SLDS-II is given by:
%%%
\begin{equation}
\nonumber
    \prod_{t=1}^T  p(s_t | s_{t-1}) \NN(x_t | A x_{t-1}, \sigma_{\rm trans}) \NN(y_t | B^{s_t} x_t, \sigma_{\rm obs}^{s_t})
\end{equation}
%%%
where now $A$ and $\sigma_{\rm trans}$ are state-independent and $B^{s_t}$ and $\sigma_{\rm obs}^{s_t}$ are state-dependent.
Finally, the joint probability of variant SLDS-III is given by
%%%
\begin{equation}
\nonumber
    \prod_{t=1}^T  p(s_t | s_{t-1}) \NN(x_t | A^{s_t} x_{t-1}, \sigma_{\rm trans}^{s_t}) \NN(y_t | B^{s_t} x_t, \sigma_{\rm obs}^{s_t})
\end{equation}
%%%
where now both the transition and emission probabilities are state-dependent. 
In all our experiments we use $K=2$ switching states and set the dimension of the continous state to $\rm{dim}(x_t)=5$.

To compute the log marginal likelihood used in training we use a moment-matching approximation with a window length
of $L$, see Ex.~\ref{ex:slds}. During prediction and smoothing we use $L=1$.

The raw dataset has $T=14980$ timesteps, which we subsample by a factor of 20, yielding a
dataset with $T=749$. We use the first 400 timesteps for training. Of the remaining 349 
timesteps, we use random subsets of size 149 and 200 for validation and testing, respectively.
In particular we use the validation set to choose learning hyperparameters and determine early stopping for
gradient ascent. The 14-dimensional outputs $\{y_{t}\}$ are normalized to have zero mean and unit variance.

We use the Adam optimizer for training \citep{kingma2014adam}. We train for up to 250 gradient steps and decay the learning rate exponentially.
We use the validation set to do a hyperparameter search over the exponential decay factor $\gamma$ and the momentum parameter $\beta_1$.
For each hyperparameter setting we do 7 independent runs with different random number seeds for parameter initialization. 
We then report results on the test set.

\begin{figure*}[ht!]
\begin{lstlisting}[language=Python, frame=tb]
# returns the marginal log probability of the observed data. we use an interpretation decorator to 
# signal to funsor that all reduce operations should be done using a moment-matching approximation.
#
# inputs:
# observations (torch.Tensor of shape (T, obs_dim))
# trans_probs, x_init_dist, x_trans_dist, y_dist (funsors)
@funsor.interpreter.interpretation(funsor.terms.moment_matching)
def marginal_log_prob(observations, trans_probs, x_init_dist, x_trans_dist, y_dist,
                      L=2, num_components=2, hidden_dim=5):
    log_prob = funsor.Number(0.)
    s_vars = {-1: funsor.Tensor(torch.tensor(0), dtype=num_components)}
    x_vars = {}

   for t, y in enumerate(observations):
        s_vars[t] = funsor.Variable(f"s_{t}", funsor.bint(num_components))
        x_vars[t] = funsor.Variable(f"x_{t}", funsor.reals(hidden_dim))

        # incorporate discrete switching probability p(s_t | s_{t-1})
        log_prob += dist.Categorical(trans_probs(s=s_vars[t - 1]), value=s_vars[t])

        # incorporate continuous transition probability p(x_t | x_{t-1}, s_t)
        if t == 0:
            log_prob += x_init_dist(value=x_vars[t])
        else:
            log_prob += x_trans_dist(s=s_vars[t], x=x_vars[t - 1], y=x_vars[t])

        # do a moment-matching reduction of latent variables from L time steps in the past 
        # [i.e. we retain a running (L+1)-length window of latent variables throughout the for loop]
        if t > L - 1:
            log_prob = log_prob.reduce(ops.logaddexp, {s_vars[t - L].name, x_vars[t - L].name})

        # incorporate observation probability p(y_t | x_t, s_t)
        log_prob += y_dist(s=s_vars[t], x=x_vars[t], y=y)

    T = data.shape[0]
    for t in range(L):
        log_prob = log_prob.reduce(ops.logaddexp, {s_vars[T - L + t].name, x_vars[T - L + t].name})
            
    return log_prob
\end{lstlisting}
\caption{\texttt{funsor} code for the computation of the log marginal probability $\log p(y_{1:T})$ for the SLDS model
in Sec.~\ref{sec:slds}.}
\label{ex:slds}
\end{figure*}

%%%%%%%%%%%%%%%%%%%%%%%%%%%%%%%%%%%%%%%%%%%%%%%%%%%%%%%%%%%%%%%%%%%%%%%%%%%%%%
\subsection{Neural variational Kalman filter}
\label{sec:appbart}

In this section we describe in more detail the experiment of Sec.~\ref{sec:bart} of the main text. Since this experiment is more complex, in addition to full source code in a separate file we also include isolated Funsor code in Figs.~\ref{fig:appbart-model}-\ref{fig:appbart-elbo} of the appendix.

We examine data publicly available at \url{https://www.bart.gov/about/reports/ridership}, containing hourly ridership counts between every pair of Bay Area Rapid Transit train stations for the years 2011-2018.
Our objective is to jointly forecast all station-station pairs such that users can aggregate these forecasts as desired, e.g. rides between a given pair of stations, or all arrivals-to and departures-from a given station, as in Figures~\ref{fig:appbart-forecast}.

\begin{figure*}
  \centering{
    \includegraphics[scale=0.6]{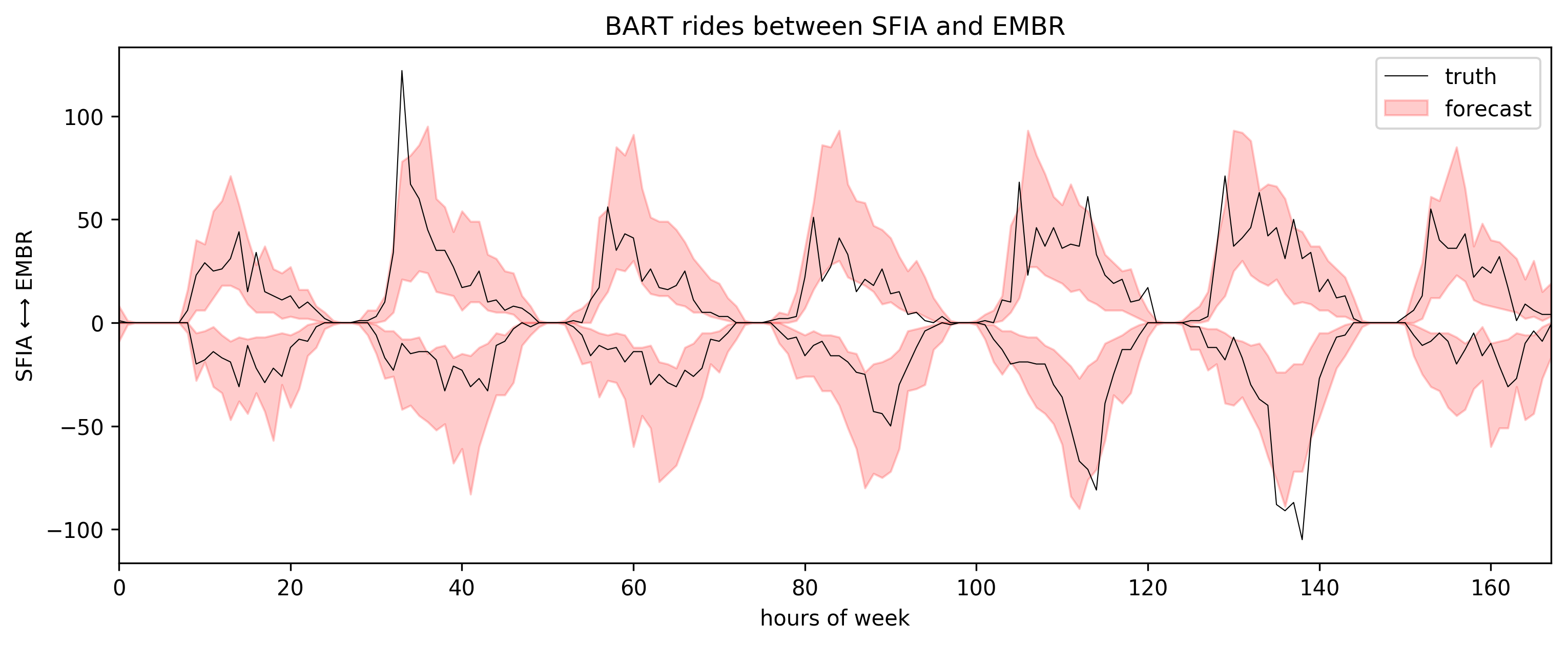}
    \includegraphics[scale=0.6]{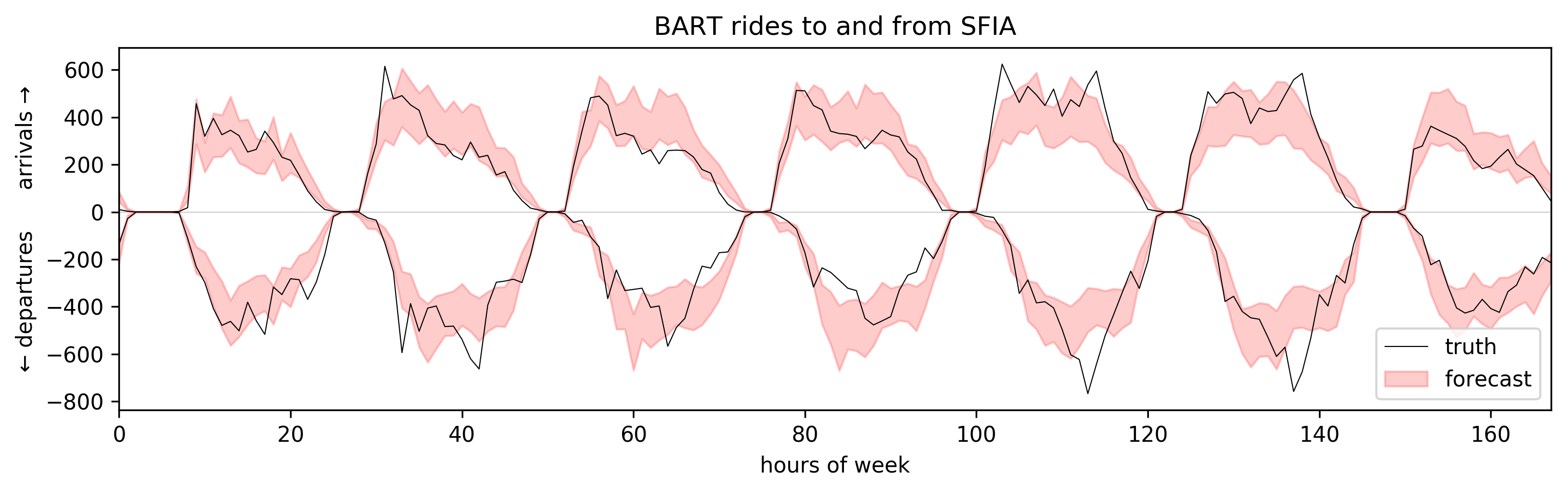}
    \includegraphics[scale=0.6]{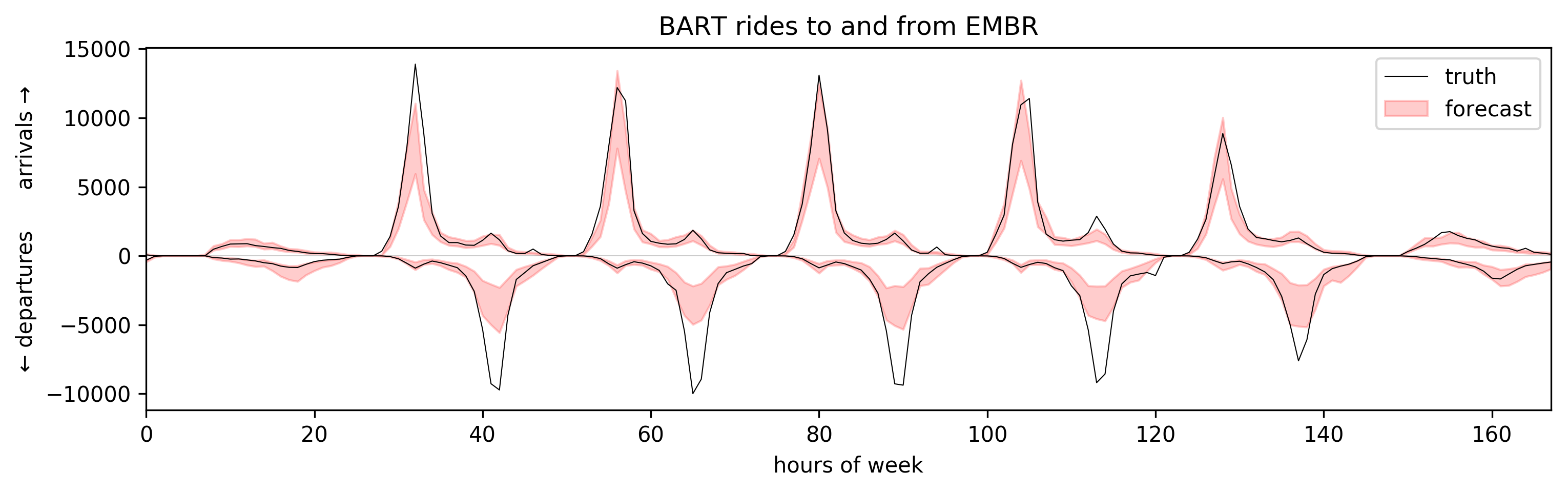}
  }
  \caption{
    One week of forecasted (pink region) and true (black line) traffic for the San Francisco international airport station and Embarcadero station.
    Forecast regions are 10\% and 90\% percentiles, and should bound the truth roughly 80\% of the time.
    Note total arrivals-to and departures-from any one station are much larger than traffic between a single pair of stations.
    See Sec.~\ref{sec:appbart} for details.
  }
  \label{fig:appbart-forecast}
\end{figure*}

Our generative model and collapsed variational inference model are shown in Figure~\ref{fig:appbart-model} and Figure~\ref{fig:appbart-guide}, respectively.
Our model neural network is a multilayer perceptron of the form linear-sigmoid-linear, whose output we split into (i) a gate logit which is mapped to a probability $p$ via a sigmoid funnction, and (ii) a Poisson rate which is mapped to a bounded positive number $\lambda$ via a bounded exponential function (combining an affine transform and a sigmoid).
Our guide neural network is a multilayer perceptron of the form linear-sigmoid-linear and with middle layer tuned to the same low-dimension as the model state space (we examine sizes $\{2,4,8\}$).
Both neural networks operate independently over each time step, i.e. we rely on the Gaussian state space model rather than a convolution or RNN for coupling states over time.
Our mean field inference model additionally predicts latent state $z$ from a multilayer perceptron; this neural net introduces weak time dependency in the form of a fully time-pooled layer.

\begin{figure*}[ht!]
\begin{lstlisting}[language=Python, frame=tb]
def model(features, trip_counts):
    total_hours = len(features)
    observed_hours, n, n = trip_counts.shape
    gate_rate = funsor.Variable("gate_rate_t", reals(observed_hours, 2 * n * n))["time"]

    @funsor.torch.function(reals(2 * n * n), (reals(n, n, 2), reals(n, n)))
    def unpack_gate_rate(gate_rate):
        batch_shape = gate_rate.shape[:-1]
        gate, rate = gate_rate.reshape(batch_shape + (2, n, n)).unbind(-3)
        gate = gate.sigmoid().clamp(min=0.01, max=0.99)
        rate = bounded_exp(rate, bound=1e4)
        gate = torch.stack((1 - gate, gate), dim=-1)
        return gate, rate

    # Create a Gaussian latent dynamical system.
    init_dist, trans_matrix, trans_dist, obs_matrix, obs_dist = \
        nn_dynamics(features[:observed_hours])
    init = dist_to_funsor(init_dist)(value="state")
    trans = matrix_and_mvn_to_funsor(trans_matrix, trans_dist,
                                     ("time",), "state", "state(time=1)")
    obs = matrix_and_mvn_to_funsor(obs_matrix, obs_dist,
                                   ("time",), "state(time=1)", "gate_rate")

    # Compute dynamic prior over gate_rate.
    prior = trans + obs(gate_rate=gate_rate)
    prior = MarkovProduct(ops.logaddexp, ops.add,
                          prior, "time", {"state": "state(time=1)"})
    prior += init
    prior = prior.reduce(ops.logaddexp, {"state", "state(time=1)"})

    # Compute zero-inflated Poisson likelihood.
    gate, rate = unpack_gate_rate(gate_rate)
    likelihood = fdist.Categorical(gate["origin", "destin"], value="gated")
    trip_counts = tensor_to_funsor(trip_counts, ("time", "origin", "destin"))
    likelihood += funsor.Stack("gated", (
        fdist.Poisson(rate["origin", "destin"], value=trip_counts),
        fdist.Delta(0, value=trip_counts)))
    likelihood = likelihood.reduce(ops.logaddexp, "gated")
    likelihood = likelihood.reduce(ops.add, {"time", "origin", "destin"})

    return prior + likelihood
\end{lstlisting}
  \caption{
    Funsor computation of the generative model in ridership forecasting Sec.~\ref{sec:appbart}.
    Here \lstinline$nn_dynamics(-)$ is the model's neural network.
    Note that \lstinline$@funsor.torch.function$ combines the lifting operator $\widehat f$ with variable substitution, and is used to lift a Python/PyTorch function to a funsor with one free variable for each function argument.
    Note also the use of the take(-,-) operator described in Example~\ref{ex:syntax}, used to change the shape of funsors like \lstinline$rate["origin", "destin"]$, where the symbols ``origin'' and ``destin'' are coerced to Variable funsors.
    We use helpers like \lstinline$matrix_and_mvn_to_funsor$ to convert between PyTorch distribution objects and funsors.
  }
  \label{fig:appbart-model}
\end{figure*}

\begin{figure*}[ht!]
\begin{lstlisting}[language=Python, frame=tb]
def guide(features, trip_counts):
    observed_hours = len(trip_counts)
    log_counts = trip_counts.reshape(observed_hours, -1).log1p()
    loc_scale = ((nn_diag_part * log_counts.unsqueeze(-2)).reshape(observed_hours, -1) +
                 nn_lowrank(torch.cat([features[:observed_hours], log_counts], dim=-1)))
    loc, scale = loc_scale.reshape(observed_hours, 2, -1).unbind(1)
    scale = bounded_exp(scale, bound=10.)

    # Create a diagonal normal distribution.
    diag_normal = dist.Normal(loc, scale).to_event(2)
    return dist_to_funsor(diag_normal)(value="gate_rate_t")
\end{lstlisting}
  \caption{
    Funsor computation of the collapsed variational inference model in ridership forecasting Sec.~\ref{sec:appbart}.
    Here \lstinline$nn_diag_part$ is a learnable parameter and \lstinline$nn_lowrank(-)$ is the guide's neural network.
    We use helpers like \lstinline$dist_to_funsor$ to convert between PyTorch distribution objects and funsors.
  }
  \label{fig:appbart-guide}
\end{figure*}

We train on random two-week minibatches of data.
This introduces two forms of bias which we argue are negligible.
First, subsampling a time series introduces dependency bias, however empirically data is week-to-week Markov, so that a single week of data captures all long-term effects.
Second, subsampling windows introduces a trapezoidal data weighting whereby the first two weeks and last two weeks are not uniformly sampled; however we find this negligible since we train on at least six years of data at a time.
An advantage of minibatches spanning exactly two weeks is that the cyclic hour-of-week features are evenly covered in each minibatch.

We train using an Adam optimizer \cite{kingma2014adam} with gradient clipping, learning rate that exponentially decays from $0.05$ to $0.005$ over 1000 gradient descent steps, and momentum parameters $\beta=(0.8, 0.99)$.
Our loss function is the negative ELBO, as computed in Figure~\ref{fig:appbart-elbo}.

\begin{figure}[ht!]
\begin{lstlisting}[language=Python, frame=tb]
def elbo_loss(features, counts):
    # Interpret the inference model exactly.
    q = guide(features, counts)

    # Interpret the generative model lazily.
    with interpretation(lazy):
        p = model(features, counts)
        pq = p - q

    # Monte Carlo approximate the ELBO.
    with interpretation(monte_carlo):
        elbo = funsor.Integrate(q, pq, "gate_rate_t")

    loss = -elbo
    assert isinstance(loss, funsor.Tensor)
    return loss.data
\end{lstlisting}
  \caption{Funsor computation of the ELBO for variational inference in ridership forecasting Sec.~\ref{sec:appbart}. Note that here the user interleaves three different interpretations: \textsc{Exact}, \textsc{Lazy}, and \textsc{MonteCarlo}.}
  \label{fig:appbart-elbo}
\end{figure}

\end{document}